\documentclass{article}

\PassOptionsToPackage{numbers, sort&compress}{natbib}

 \usepackage[main, final]{neurips_2026}

\usepackage[utf8]{inputenc} 
\usepackage[T1]{fontenc}    
\usepackage{hyperref}       
\usepackage{url}            
\usepackage{booktabs}       
\usepackage{amsfonts}       
\usepackage{nicefrac}       
\usepackage{microtype}      
\usepackage{xcolor}         

\usepackage{graphicx}
\definecolor{lightcitegreen}{RGB}{76, 175, 80}
\definecolor{lightciteblue}{RGB}{65,105,225}
\hypersetup{
    colorlinks=true,
    citecolor=lightciteblue,
    linkcolor=lightciteblue
}

\hypersetup{
    colorlinks=true,
    citecolor=lightciteblue,
    linkcolor=lightciteblue,
    urlcolor=lightciteblue
}

\usepackage{amsmath}

\usepackage{amsmath,amsfonts,bm}

\def\eqref#1{eq.~\ref{#1}}

\def\1{\bm{1}}

\def\vb{{\bm{b}}}

\def\mA{{\bm{A}}}

\def\mI{{\bm{I}}}

\def\mL{{\bm{L}}}

\def\mP{{\bm{P}}}

\def\mR{{\bm{R}}}
\def\mS{{\bm{S}}}

\def\mU{{\bm{U}}}
\def\mV{{\bm{V}}}
\def\mW{{\bm{W}}}
\def\mX{{\bm{X}}}
\def\mY{{\bm{Y}}}

\def\mSigma{{\bm{\Sigma}}}

\DeclareMathAlphabet{\mathsfit}{\encodingdefault}{\sfdefault}{m}{sl}
\SetMathAlphabet{\mathsfit}{bold}{\encodingdefault}{\sfdefault}{bx}{n}

\usepackage{mathtools}
\usepackage{wrapfig}
\usepackage{multirow}
\usepackage{graphicx}

\usepackage{subcaption}
\usepackage{amsthm}
\newtheorem{proposition}{Proposition}
\usepackage{algorithm}
\usepackage{algpseudocode}

\title{DyRA: Dynamic Residual Approximation for \\ Efficient Matrix Multiplication in DNNs}

\author{
Daewon Chae$^{1}$ \quad
Hyunwon Chung$^{1}$ \quad
Changwoo Lee$^{2}$\thanks{Contributed to this work while at the University of Michigan.} \quad
Hun-Seok Kim$^{1}$ \\
$^{1}$Department of EECS, University of Michigan \quad
$^{2}$Qualcomm AI Research \\
\texttt{\{daewon, hyunwon, hunseok\}@umich.edu} \quad
\texttt{changle@qti.qualcomm.com}
}

\begin{document}

\maketitle
\setcounter{footnote}{0}

\vspace{-1em}
\begin{abstract}
Large-scale foundation models achieve strong performance across diverse tasks, but their size makes inference costly, largely due to dense matrix multiplications. Prior work reduces this cost by replacing dense weight matrices with efficient structured forms such as low-rank factorizations. However, these methods approximate weights rather than the output activations that determine inference accuracy. Consequently, small weight-space errors can be amplified by input activations, producing large output errors.
In this work, we propose DyRA, an input-adaptive method that improves structured matrix multiplication approximation by correcting residual output errors during inference. We show that matrix multiplication can be approximated more effectively by directly optimizing low-rank factors of the output. DyRA builds on this insight by dynamically approximating and correcting the output error introduced by structured weight approximations. This combines efficient structured computation with input-dependent correction, yielding a more faithful approximation of full matrix multiplication under the same computational budget.
Across vision, speech, and language models, DyRA consistently improves the accuracy–efficiency trade-off over structured weight approximations alone. Notably, DyRA achieves a $1.5\times$ end-to-end GPU speedup for DINOv3 while reducing accuracy degradation by more than $3\times$ relative to weight-only baselines. Our code is available at \href{https://github.com/daewon88/DyRA}{\texttt{https://github.com/daewon88/DyRA}}.
\end{abstract}

\section{Introduction}
Recent advances in deep learning have achieved state-of-the-art performance across diverse tasks, but at substantial computational cost. Dense matrix multiplications account for much of the computation in modern deep networks\footnote{For example, in ViT-L/16~\cite{vit}, linear layers account for approximately 96.7\% of total FLOPs.}, motivating work on accelerating them by replacing dense weight matrices with efficient \textit{structured} forms~\cite{blast,monarch,gblr,btt,butterfly,mode}. Low-rank approximation is a representative example, while block-structured variants such as Monarch~\cite{monarch} and BLAST~\cite{blast} offer more expressive alternatives. Given a pre-trained model, these methods approximate its weight matrices via closed-form decompositions or iterative optimization, enabling faster inference with reduced FLOPs.

Despite their effectiveness, these methods are fundamentally weight-centric: they optimize approximations in weight space without explicitly accounting for the input activations seen at inference time. However, the actual computation is determined by the realized product between activations and weights, not by the weights alone. Thus, even small weight-space residuals can be amplified by the input, causing large output activation errors~\cite{asvd}. We illustrate this limitation on DINOv3~\cite{dinov3}, a large-scale vision foundation model. As shown in Figure~\ref{fig:dino_teaser}, replacing all dense linear layers with structured matrices such as low-rank, Monarch, and BLAST makes PCA feature visualizations noticeably noisier and less semantically organized than those of the original model. This demonstrates that approximating weights alone may fail to preserve model behavior.

To address this limitation, we propose DyRA (\textbf{Dy}namic \textbf{R}esidual \textbf{A}pproximation), an input-adaptive method that improves structured matrix multiplication by \textit{correcting output errors on the fly}. DyRA augments a static weight approximation with a dynamic correction adapted to the input activations. This correction is computed through nested optimization during inference to minimize residual output errors induced by the approximated weight matrix. Rather than evaluating the exact residual through an expensive dense multiplication, DyRA uses low-rank factors to approximate the residual directly in the output space, not the weight space. These factors are efficiently computed at inference time using alternating least squares, without explicitly forming the dense residual product. Consequently, DyRA provides input-adaptive output correction that more faithfully approximates the original matrix multiplication under a fixed computational budget. Figure~\ref{fig:dino_teaser} shows that this correction substantially preserves the semantic structure of DINOv3 features compared with weight-only approximations.

We validate DyRA as a test-time model approximation and acceleration technique across pre-trained vision, language, and speech models. Combined with structured matrices such as low-rank, Monarch, and BLAST, DyRA consistently improves the accuracy--efficiency trade-off over weight-only baselines. For example, on semantic segmentation with DINOv3 at a 60\% FLOPs reduction, DyRA limits the mIoU drop to 1.7 points, whereas the weight-only baseline drops by 54.1 points. These gains translate to practical runtime improvements: on DINOv3, it achieves a $1.52\times$ GPU speedup while incurring only 1/3 of the accuracy drop of the weight-only baseline at the same speedup.

\begin{figure}[!t]
    \begin{center}
    \includegraphics[width=1.00\linewidth]{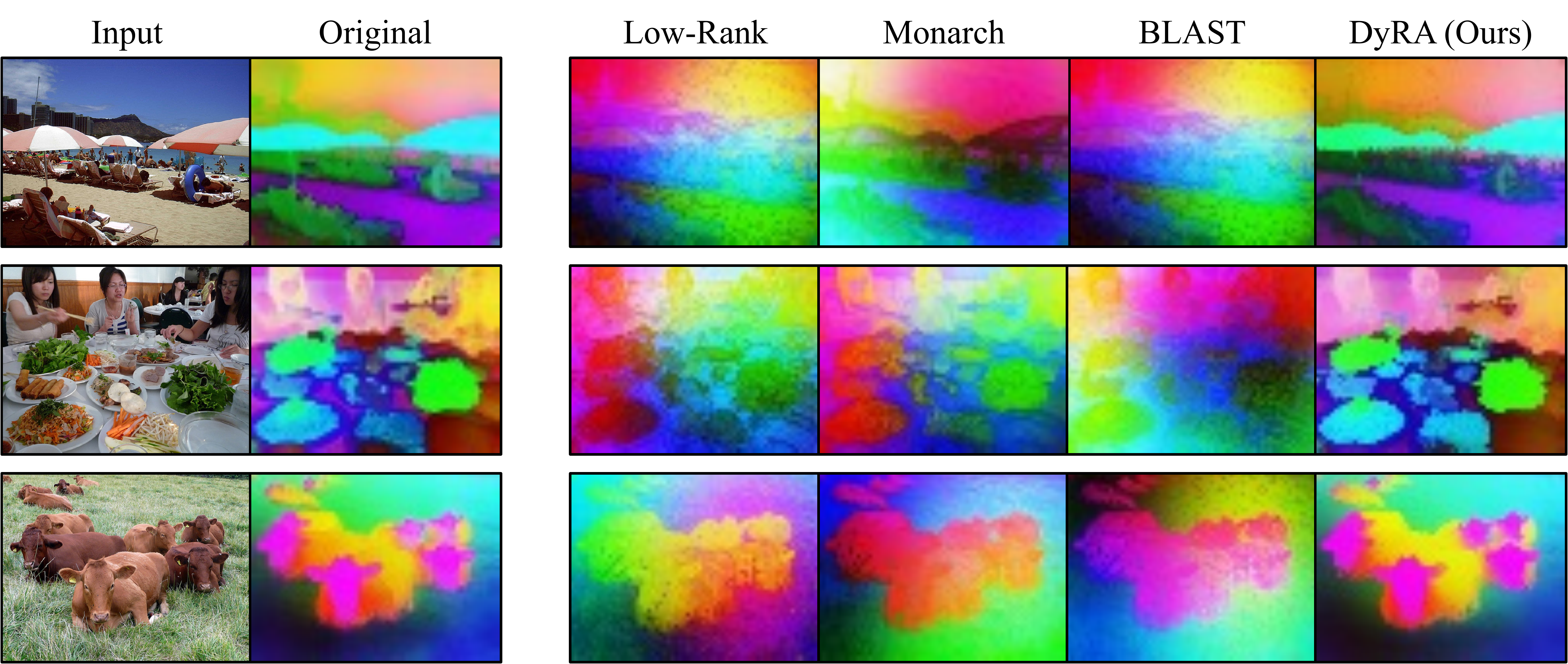}
    \end{center}
    \caption{DINOv3 feature visualizations under a 60\% FLOPs reduction. After replacing all dense linear layers with structured weights, we visualize the resulting features using the DINO PCA protocol, mapping the first three principal components to RGB.  Weight-only approximations produce noisy, semantically degraded features, while DyRA preserves structures close to the original model.}
    \vspace{-1em}
    \label{fig:dino_teaser}
\end{figure}


\section{Proposed Method}
\subsection{Overview}
Starting from a pre-trained weight matrix $\mW$, matrix approximation or model compression methods approximate the weight with a computationally efficient structured form $\widehat{\mW}$ (e.g., low-rank or block-structured approximation). The original weight matrix can be written as
\begin{equation}
\mW = \widehat{\mW} + \mW_{\mathrm{res}},    
\end{equation}
where $\mW_{\mathrm{res}} = \mW - \widehat{\mW}$ denotes the residual. Accordingly, the exact matrix product can be written as
\begin{equation}
    \mX\mW = \mX\widehat{\mW} + \mX\mW_{\mathrm{res}}
\end{equation}
given input activation $\mX$.
Conventional approximation methods use only the first term and approximate the full product as $\mX\mW \approx \mX\widehat{\mW}$.
The approximation error is thus given by $\mX\mW_{\mathrm{res}}$, rather than the residual weight matrix $\mW_{\mathrm{res}}$ alone. This implies that, even when $\widehat{\mW}$ is chosen to minimize $\|\mW_{\mathrm{res}}\|$, the resulting error remains input-dependent and may still be large in the output space due to its interaction with the input $\mX$.

To address this limitation, DyRA explicitly accounts for the input-dependent residual product $\mX\mW_{\mathrm{res}}$ by approximating it at inference time. This augments the static approximation $\mX\widehat{\mW}$ with a dynamic correction, yielding a more faithful estimate of $\mX\mW$ within a fixed computational budget. In the following, we introduce the \textit{dynamic residual approximation} that enables this correction efficiently, and analyze its key properties.

\subsection{Dynamic Residual Approximation}
Given an input matrix $\mX \in \mathbb{R}^{n \times c_1}$ and a residual weight matrix
$\mW_{\mathrm{res}} \in \mathbb{R}^{c_1 \times c_2}$, our goal is to efficiently approximate the
residual product $\mX\mW_{\mathrm{res}}$ at inference time \textit{without performing the full matrix multiplication}.
To this end, we formulate the residual output computation as an optimization problem over low-rank factors.
Specifically, we seek a low-rank approximation
$\mX\mW_{\mathrm{res}} \approx \mL\mR$, where
$\mL \in \mathbb{R}^{n \times r}$ and
$\mR \in \mathbb{R}^{r \times c_2}$ with
$r \ll \min(n,c_2)$, by solving
\begin{equation}
\min_{\mL \in \mathbb{R}^{n \times r},\, \mR \in \mathbb{R}^{r \times c_2}}
\left\|\mX\mW_{\mathrm{res}} - \mL\mR\right\|_F^2.
\label{eq:low_rank}
\end{equation}
This objective directly approximates the realized residual output $\mX\mW_{\mathrm{res}}$, rather than the residual weight matrix alone.
Its optimal rank-$r$ solution is given by the truncated singular value decomposition (SVD) of $\mX\mW_{\mathrm{res}}$~\cite{svd}. However, computing this solution requires explicitly forming the full product $\mX\mW_{\mathrm{res}}$, which defeats the purpose of reducing computational cost. We therefore directly optimize the low-rank factors $\mL$ and $\mR$
without explicitly forming $\mX\mW_{\mathrm{res}}$. 
Specifically, we optimize the objective in Eq.~\ref{eq:low_rank} using alternating least squares (ALS).
Starting from an initial right factor $\mR^{(0)}$, ALS alternates between least-squares subproblems obtained by fixing one factor and optimizing the other. This yields the following closed-form updates at iteration $t$:
\footnote{In practice, we use a ridge-regularized version of these updates by
adding a small $\lambda \ge 0$ to the Gram matrices, i.e.,
$\mR^{(t)}\mR^{(t)\top}+\lambda\mI_r$ and
$\mL^{(t+1)\top}\mL^{(t+1)}+\lambda\mI_r$, for numerical stability.}
\begin{align}
    \mL^{(t+1)}
    &=
    \mX \left(\mW_{\mathrm{res}} \mR^{(t)\top}\right)
    \left(\mR^{(t)} \mR^{(t)\top}\right)^{-1},
    \label{eq:L_update} \\
    \mR^{(t+1)}
    &=
    \left(\mL^{(t+1)\top} \mL^{(t+1)}\right)^{-1}
    \left(\mL^{(t+1)\top} \mX\right) \mW_{\mathrm{res}}.
    \label{eq:R_update}
\end{align}
A key efficiency advantage of this ALS formulation is that each update is performed through low-rank ($r$) intermediate computations. In the $\mL$ update, $\mW_{\mathrm{res}}$ is first projected onto the rank-$r$ right factor, yielding
$\mW_{\mathrm{res}}\mR^{(t)\top} \in \mathbb{R}^{c_1 \times r}$.
This reduces the output dimension from $c_2$ to $r$, so that $\mX$ is multiplied only with a slim $c_1 \times r$ matrix. Similarly, in the $\mR$ update, the input matrix is first projected onto the rank-$r$ left factor as
$\mL^{(t+1)\top}\mX \in \mathbb{R}^{r \times c_1}$, reducing the input side from $n$ to $r$ before being multiplied by $\mW_{\mathrm{res}}$. As a result, the residual output is estimated efficiently as $\mL\mR \approx \mX\mW_{\mathrm{res}}$ using low-dimensional intermediate products, without explicitly forming the full residual output.

\begin{figure}[t]
    \begin{center}
    \includegraphics[width=1.0\linewidth]{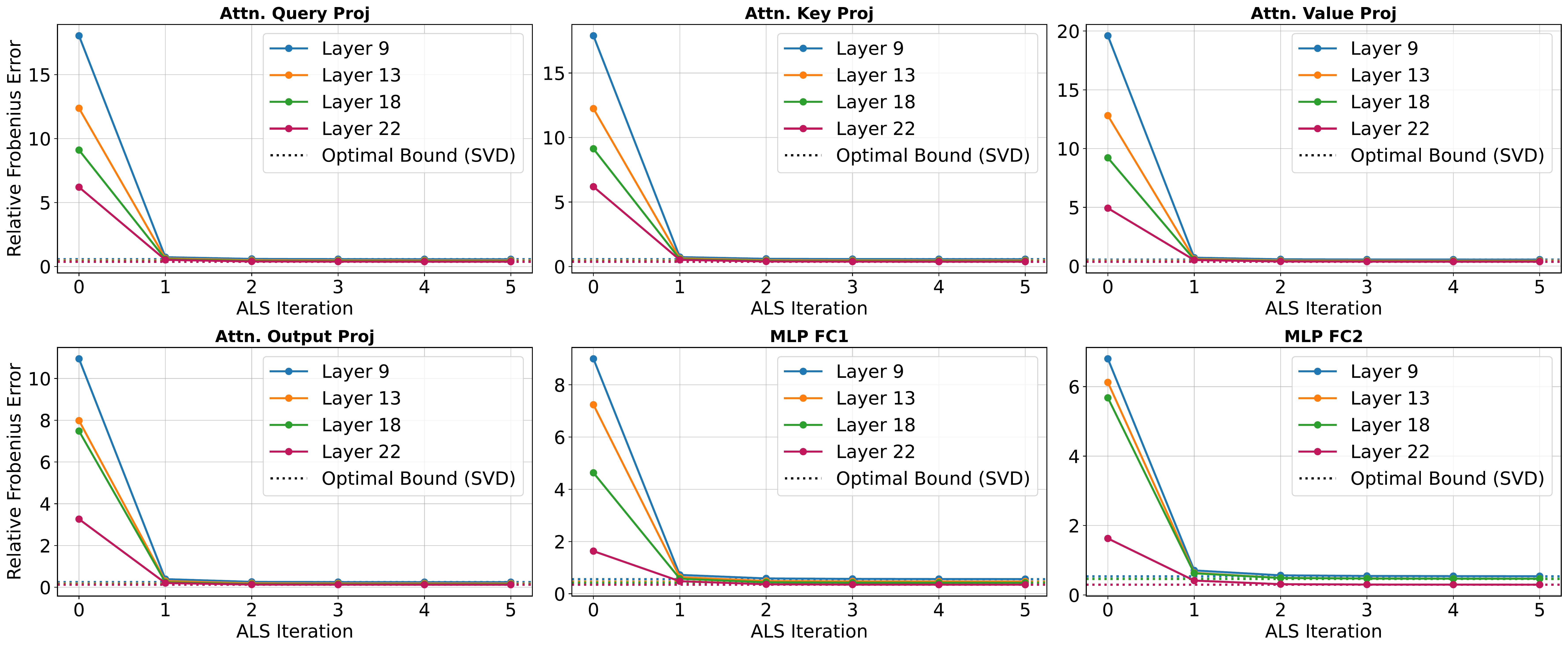}
    \end{center}
    \caption{Convergence of ALS-based residual product approximation across ViT layers. Relative squared Frobenius error is measured for attention projections and MLP layers with target rank $r=16$.}
    \label{fig:als_convergence}
    \vspace{-1em}
\end{figure}

\paragraph{Convergence of ALS Residual Approximation}
\label{paragraph:convergence}
We empirically examine the convergence of the ALS-based residual product approximation using a Vision Transformer (ViT)~\cite{vit} evaluated with ImageNet~\cite{imagenet} samples. We apply ALS to approximate the residual product $\mX\mW_{\mathrm{res}}$ at multiple linear layers of the model with a small target rank $r=16$, initializing $\mR^{(0)}$ from a Gaussian distribution. In Figure~\ref{fig:als_convergence}, we report the relative squared Frobenius error across ALS steps for both attention projections and multi-layer perceptron (MLP) layers.
Across layers, the approximation rapidly approaches the optimal rank-$r$ solution within only one or two iterations.
These results indicate that only a single update per factor is sufficient to effectively approximate the residual product, demonstrating the efficiency of ALS for the dynamic residual approximation. Further details on the experimental setup, along with additional results across various layers, are provided in Appendix~\ref{sup:convergence}.

\subsection{Analysis of DyRA with One-Step ALS Residual Approximation}
Given the rapid convergence observed above, we use only one ALS iteration as the default configuration of DyRA throughout this work.\footnote{
While more ALS iterations can further refine the approximation, one iteration captures most of the gain. Thus, under a fixed computation budget, we use a single iteration and allocate the saved computation to using a larger rank $r$.
}
The resulting approximation is
\begin{equation}
    \mX\mW
    \approx
    \mX\widehat{\mW}+\operatorname{ALS}_{1}\!\left(\mX\mW_{\mathrm{res}}\right)=
    \mX\widehat{\mW}+\mL^{(1)}\mR^{(1)},
    \label{eq:drama_final}
\end{equation}
where $\operatorname{ALS}_{1}(\cdot)$ denotes one ALS iteration, which updates $\mL$ once and $\mR$ once. Notably, the left-factor update admits an efficient implementation:
\begin{equation}
    \mL^{(1)}
    =
    \mX
    \underbrace{
    \left(\mW_{\mathrm{res}}\mR^{(0)\top}\right)
    \left(\mR^{(0)}\mR^{(0)\top}\right)^{-1}
    }_{\mP_{\mL}\;:\;\text{precomputable}}
    \label{eq:init_L_update}
\end{equation}
where $\mP_{\mL}$ is \textit{precomputed offline} as it depends only on $\mW_{\mathrm{res}}$ and the fixed initialization $\mR^{(0)}$. Thus, computing $\mL^{(1)}$ at inference time requires only multiplying $\mX$ by a slim $c_1\times r$ matrix $\mP_{\mL}$, reducing the online cost of $\operatorname{ALS}_1$ largely to the $\mR^{(1)}$ update.

\paragraph{Subspace Comparison under the Same Rank Increase }
\label{paragraph:subspace}
We now analyze this one-step variant to clarify what DyRA contributes beyond static approximation. While DyRA can be combined with any structured weight $\widehat{\mW}$ such as Monarch~\cite{monarch} and BLAST~\cite{blast}, we focus on the low-rank case for analytical clarity. We first compare DyRA with simply increasing the rank of a static low-rank approximation. We then quantify the additional cost of DyRA and use it to construct a FLOPs-matched static baseline.

Specifically, we compare DyRA at correction rank $r$ with a static (weight-only) low-rank approximation whose rank is increased by the same amount, from $k$ to $k+r$. 
For DyRA, we take the rank-$k$ truncated SVD of $\mW$ as the static approximation and apply rank-$r$ DyRA correction. The resulting DyRA is compared with the static rank-$(k+r)$ truncated SVD approximation of the weight.
The following proposition characterizes this difference.
\begin{proposition}
\label{proposition1}
Let $\mW=\mU\mSigma\mV^\top$ be the SVD of $\mW$, and let
$\widehat{\mW}=\mU_{1:k}\mSigma_{1:k}\mV_{1:k}^\top$ be its rank-$k$ truncated SVD approximation.
Assume DyRA approximates $\mX\mW_{\mathrm{res}}$ with rank $r$ by one ALS iteration, initialized with
$\mR^{(0)}=\mV_{k+1:k+r}^\top$.
Then the resulting approximation can be written as
\begin{equation}
\begin{aligned}
    \mX\mW
    &\approx
    \mX
    \begin{bmatrix}
        \mU_{1:k}\mSigma_{1:k} &
        \mU_{k+1:k+r}\mSigma_{k+1:k+r}
    \end{bmatrix}
    \begin{bmatrix}
        \mV_{1:k}^\top \\
        \mR^{(1)}
    \end{bmatrix},
\end{aligned}
\end{equation}
where $\mR^{(1)}
=
(\mA_r^{\mathrm{res}\top}\mX^\top\mX\mA_r^{\mathrm{res}})^{-1}
\mA_r^{\mathrm{res}\top}\mX^\top\mX\mW_{\mathrm{res}}$
and $\mA_r^{\mathrm{res}}=\mU_{k+1:k+r}\mSigma_{k+1:k+r}$.
\end{proposition}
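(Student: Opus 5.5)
The proof is a direct computation with the one-step ALS updates in Eq.~\ref{eq:L_update}--\ref{eq:R_update}. We insert the SVD structure of $\mW_{\mathrm{res}}$ and the specific initialization $\mR^{(0)}=\mV_{k+1:k+r}^\top$, compute $\mL^{(1)}$ in closed form, and substitute it into the $\mR$ update. Combining with the static term then gives the block form. The regularization $\lambda$ is taken to be $0$, and the needed inverses are assumed to exist.

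First, since $\widehat{\mW}$ is the rank-$k$ truncated SVD, the residual is
\[
\mW_{\mathrm{res}}=\mU_{k+1:}\mSigma_{k+1:}\mV_{k+1:}^\top,
\]
whose right singular vectors are exactly the columns of $\mV_{k+1:}$. By orthonormality of the columns of $\mV$, we have $\mR^{(0)}\mR^{(0)\top}=\mV_{k+1:k+r}^\top\mV_{k+1:k+r}=\mI_r$. Moreover,
\[
\mW_{\mathrm{res}}\mR^{(0)\top}=\mU_{k+1:}\mSigma_{k+1:}\mV_{k+1:}^\top\mV_{k+1:k+r}=\mU_{k+1:k+r}\mSigma_{k+1:k+r}=\mA_r^{\mathrm{res}},
\]
because $\mV_{k+1:}^\top\mV_{k+1:k+r}$ selects the first $r$ coordinates. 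Hence $\mP_{\mL}=\mA_r^{\mathrm{res}}$ and $\mL^{(1)}=\mX\mA_r^{\mathrm{res}}$.

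Second, substituting into Eq.~\ref{eq:R_update} gives
\[
\mR^{(1)}=(\mA_r^{\mathrm{res}\top}\mX^\top\mX\mA_r^{\mathrm{res}})^{-1}\mA_r^{\mathrm{res}\top}\mX^\top\mX\mW_{\mathrm{res}},
\]
which is exactly the stated expression. This is immediate once $\mL^{(1)}$ is known. Invertibility requires $\mX\mA_r^{\mathrm{res}}$ to have full column rank, which we assume; otherwise the ridge version applies.

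Finally, Eq.~\ref{eq:drama_final} gives
\[
\mX\mW\approx\mX\mU_{1:k}\mSigma_{1:k}\mV_{1:k}^\top+\mX\mU_{k+1:k+r}\mSigma_{k+1:k+r}\mR^{(1)}.
\]
Factoring $\mX$ on the left and writing the sum of two products as a block-row times block-column product yields the claimed form.

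The only step that needs care is the identity $\mW_{\mathrm{res}}\mV_{k+1:k+r}=\mA_r^{\mathrm{res}}$. It relies on the residual sharing the trailing singular vectors of $\mW$, which is where the truncated-SVD choice of $\widehat{\mW}$ is essential. Everything else is substitution.
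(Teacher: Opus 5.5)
Your proposal is correct and follows the paper's proof essentially step for step: use orthonormality of $\mV$ to get $\mR^{(0)}\mR^{(0)\top}=\mI$ and $\mL^{(1)}=\mX\mA_r^{\mathrm{res}}$, substitute this into the $\mR$ update, and regroup the sum $\mX\widehat{\mW}+\mL^{(1)}\mR^{(1)}$ into the block product. Your explicit assumptions that $\lambda=0$ and that $\mX\mA_r^{\mathrm{res}}$ has full column rank are small additions that the paper leaves implicit.
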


The proposition shows that DyRA uses the same additional left directions as the static rank-$(k+r)$ SVD approximation,
$\mU_{k+1:k+r}\mSigma_{k+1:k+r}$.
The difference lies in how these directions are paired on the right side. A static rank-$(k+r)$ approximation uses the fixed weight-only basis
$\mV_{k+1:k+r}^\top$.
In contrast, DyRA replaces this basis with $\mR^{(1)}$, which is optimized for the current input through $\mX^\top\mX$.
Thus, DyRA is not merely increasing the static rank; it constructs an input-adaptive residual subspace tailored to the realized product $\mX\mW_{\mathrm{res}}$.

This distinction becomes clearer in the isotropic case $\mX^\top\mX=\mI$.
In this case, $\mR^{(1)}=\mV_{k+1:k+r}^\top$, so DyRA reduces to the static rank-$(k+r)$ approximation.
When $\mX^\top\mX\neq\mI$, however, the current input reweights feature directions, and $\mR^{(1)}$ generally differs from the weight-only basis.
This is precisely where DyRA gains input adaptivity. The proof is provided in Appendix~\ref{sup:proof_subspace}.

\paragraph{Computational Cost }
We now quantify the computational overhead of constructing the input-adaptive subspace.
Let $\mX\in\mathbb{R}^{n\times c_1}$, $\mW_{\mathrm{res}}\in\mathbb{R}^{c_1\times c_2}$, and let $r$ denote the rank of the dynamic residual correction. 
As shown in Eq.~\ref{eq:init_L_update}, the left-factor update multiplies $\mX$ by a slim $c_1\times r$ matrix, costing $O(nc_1r)$. 
The right-factor update projects the residual product onto the $r$-dimensional subspace defined by $\mL^{(1)}$, with dominant cost $O(nc_1r + c_1c_2r)$. Including the final multiplication $\mL^{(1)}\mR^{(1)}$, the dynamic residual correction costs $O(nc_1r + c_1c_2r + nc_2r)$. Combined with the $O(nk(c_1+c_2))$ cost of the rank-$k$ static approximation, DyRA totals $O\big(nk(c_1+c_2) + r\big(n(c_1+c_2) + c_1c_2\big)\big)$.
Since $r$ is typically much smaller than $c_1$ and $c_2$, this remains substantially cheaper than the dense product $\mX\mW$, which costs $O(nc_1c_2)$. Compared with simply increasing the static rank to $k+r$, DyRA incurs an extra $O(c_1c_2r)$ term, which is the cost of constructing the input-adaptive subspace.

\begin{figure}[!t]
    \begin{center}
    \includegraphics[width=1.0\linewidth]{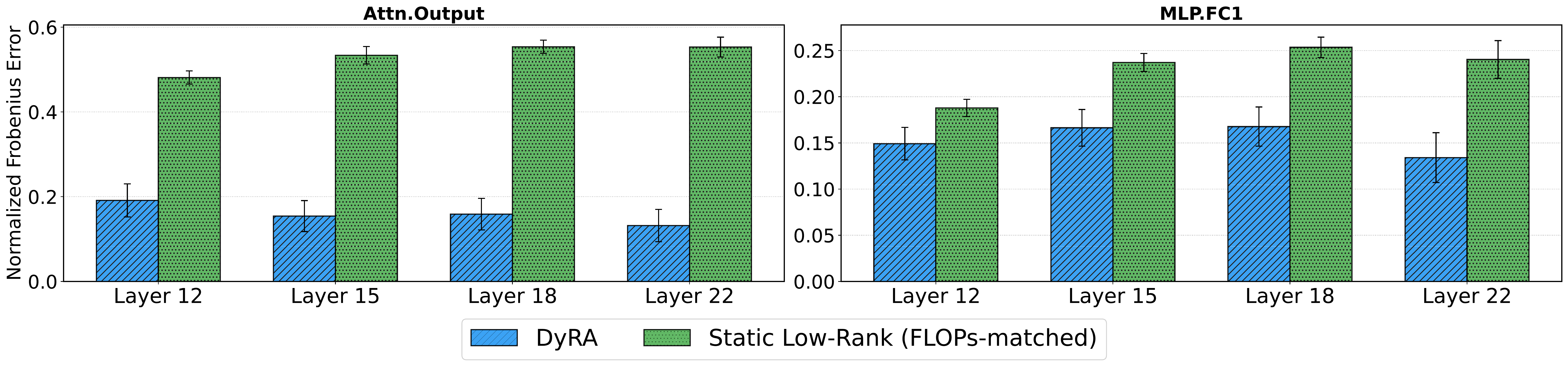}
    \end{center}
    \vspace{-0.5em}
    \caption{Remaining normalized Frobenius error of DyRA and a static low-rank baseline under a matched FLOPs budget. The error is measured between the original dense output and the approximated output across ViT layers.}
    \label{fig:flops_matched_error}
    \vspace{-1.0em}
\end{figure}
\paragraph{FLOPs-Matched Comparison}\label{paragraph:error_comparison}
Given this overhead, we examine whether DyRA's gains persist under a matched computational budget.
To this end, we compare DyRA against a FLOPs-matched static low-rank weight approximation baseline on ViT.
Starting from a rank-$k$ approximation of $\mW$, DyRA applies a rank-$r$ dynamic correction to reduce the remaining output error.
For the static baseline, we instead increase the rank from $k$ to $k+r+r'$, where the additional rank $r'$ accounts for the $O(c_1c_2r)$ overhead of the dynamic correction, so that both methods use the same FLOPs budget \footnote{The matching is done at the level of exact FLOPs counts rather than asymptotic order, so $r'$ also absorbs the constant and lower-order terms omitted in the $O(\cdot)$ notation.}.
Figure~\ref{fig:flops_matched_error} shows the remaining normalized Frobenius error across different ViT layers.
Under this FLOPs-matched setting, DyRA consistently achieves lower error than the static low-rank baseline.
These results show that, under the same computational budget, input-adaptive residual subspaces approximate the output more faithfully than higher-rank static approximations. Additional results for the remaining attention and MLP layers are provided in Appendix~\ref{sup:flops_matched_error}.

\subsection{Implementation of DyRA}
\label{sec:dyra_kernel}


To minimize GPU memory traffic, we implement DyRA with custom Triton~\cite{triton} kernels.
Rather than materializing the reconstructed residual $\mL\mR$ in HBM, we fuse the
low-rank reconstruction with the output merge: given
$\mL \in \mathbb{R}^{B \times T \times r}$ and
$\mR \in \mathbb{R}^{B \times r \times N}$, the kernel accumulates
$\alpha \mL\mR$ on-chip and directly writes
$\mY_{\mathrm{base}}+\alpha\mL\mR$ to the output tensor.
For linear layers, we further fuse the base projection, bias addition, and
residual correction, computing
$\mY=\mX_{\mathrm{tmp}}\mA^\top+\vb+\alpha\mL\mR$
within a single Triton kernel.
This eliminates intermediate tensors and reduces HBM accesses.

The ALS update requires solving an $r \times r$ symmetric positive definite system for each batch instance. For small $r$, we use a Triton Cholesky solver, keeping the factorization and triangular solves in on-chip cache/registers. For a larger $r$, Cholesky becomes too sequential to fully utilize GPU parallelism, so we use a Newton--Schulz iterative solver. This replaces dependency-heavy factorization with matrix-multiplication-based updates, improving the GPU's parallelism and tensor-core utilization.

\section{Experimental Results}
\begin{figure}[t]
    \begin{center}
    \includegraphics[width=1.0\linewidth]{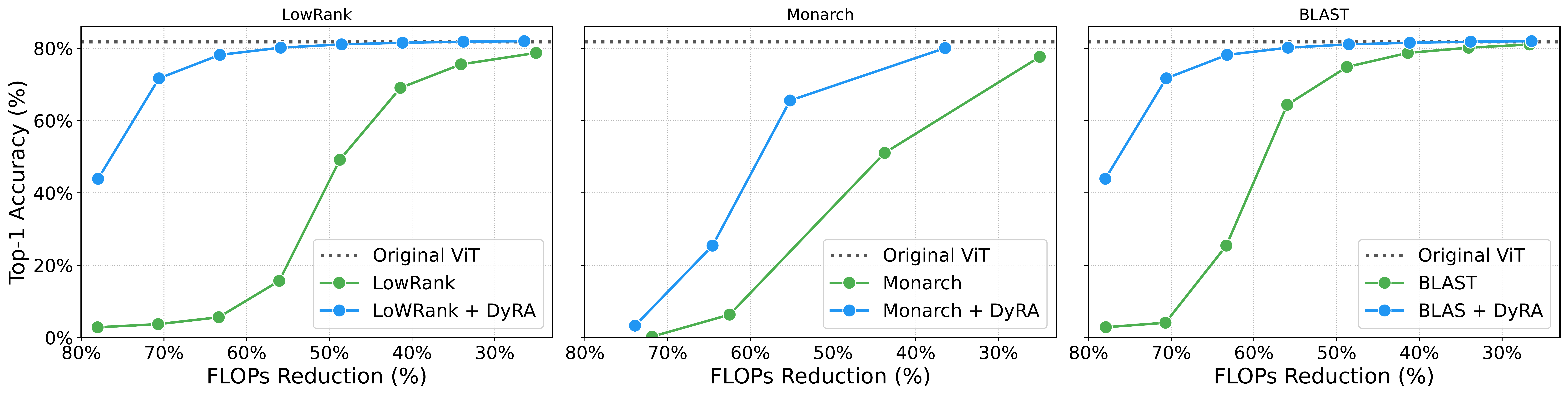}
    \end{center}
    \caption{ImageNet classification accuracy at different FLOPs reduction levels for ViT-L. DyRA consistently improves the accuracy–efficiency trade-off across low-rank, Monarch, and BLAST.}
    \vspace{-1.0em}
    \label{fig:vit_imagenet}
\end{figure}

We validate DyRA by integrating it with several structured weight approximations, including low-rank, Monarch~\cite{monarch}, and BLAST~\cite{blast}. 
We first evaluate DyRA as a test-time model approximation/acceleration technique applied to pre-trained models (without any fine-tuning) across vision, language, and speech tasks, demonstrating consistent improvements in downstream performance under matched computational budgets (Section~\ref{section:zeroshot}). 
We then measure end-to-end latency to assess whether these computational savings translate into practical speedups on a GPU (Section~\ref{section:efficiency}). 
Finally, we study settings where additional adaptation is available, showing that DyRA can be combined with post-training calibration or fine-tuning to further improve performance (Section~\ref{section:adaptation}). 
All experimental details are provided in Appendix~\ref{sup:exp_detail}.

\subsection{Test-Time Model Approximation and Acceleration Across Domains}
\label{section:zeroshot}

\paragraph{Image Classification with ViT}
We evaluate DyRA on ImageNet~\cite{imagenet} classification using a pre-trained ViT-L model~\cite{vit}. We apply structured weight approximations to all dense linear layers in both the attention and MLP blocks, and measure top-1 accuracy under different FLOPs budgets. For DyRA, we fix the dynamic residual rank to $r=8$ and vary the budget of the static weight approximation to control the overall FLOPs. 
As shown in Figure~\ref{fig:vit_imagenet}, DyRA consistently improves the accuracy--efficiency trade-off across all structured baselines.
In particular, at around 55\% FLOPs reduction, the low-rank and BLAST baselines incur accuracy drops of 66.1\% and 17.4\%, respectively, whereas combining them with DyRA reduces the drops to 2.4\% and 1.6\%.

\paragraph{Dense Prediction Tasks with DINO}
Beyond image-level classification, we further show that DyRA preserves fine-grained representations required for dense prediction.
To this end, we evaluate semantic segmentation on ADE20K~\cite{ade20k} and depth estimation on NYUv2~\cite{nyuv2} using DINOv3~\cite{dinov3}.
We apply DyRA to all linear layers and compare it against the corresponding static weight approximation baselines.
As shown in Figure~\ref{fig:dino_dense}, DyRA achieves a substantially better accuracy--efficiency trade-off across both tasks.
In particular, for semantic segmentation at 60\% FLOPs reduction, DyRA reduces mIoU by only 1.7 points relative to the original model, preserving most of its dense prediction performance.
In contrast, the static weight approximation baseline suffers a 54.1 point drop in mIoU, indicating a severe degradation in fine-grained representations.

\begin{figure}[t]
    \centering
    \begin{subfigure}{1.0\linewidth}
        \centering
        \includegraphics[width=1.0\linewidth]{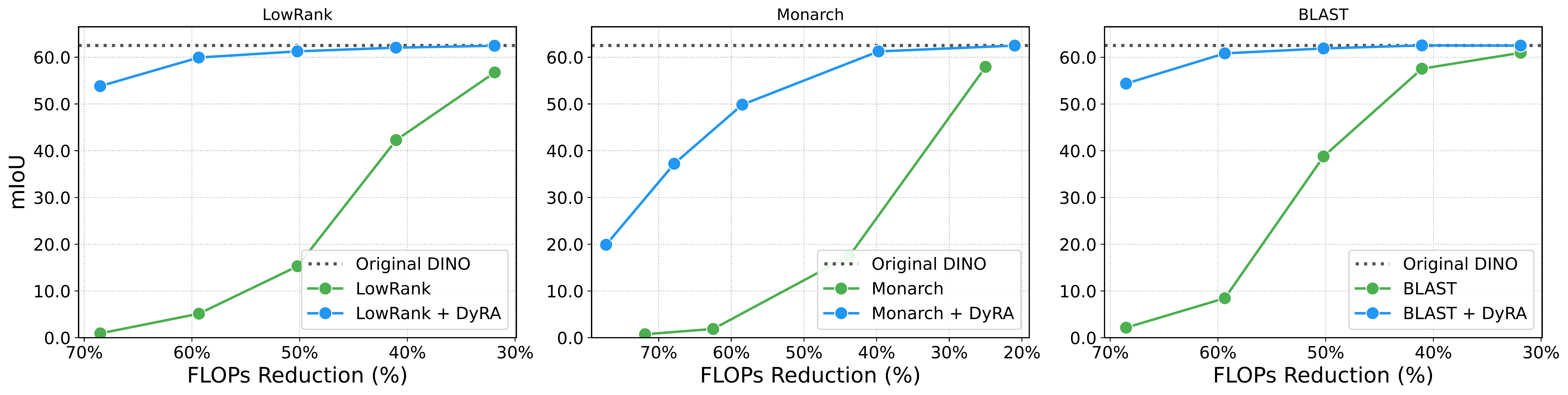}
        \caption{Semantic Segmentation on ADE20K.}
        \label{fig:vit_imagenet_main}
    \end{subfigure}


    \begin{subfigure}{1.0\linewidth}
        \centering
        \includegraphics[width=1.0\linewidth]{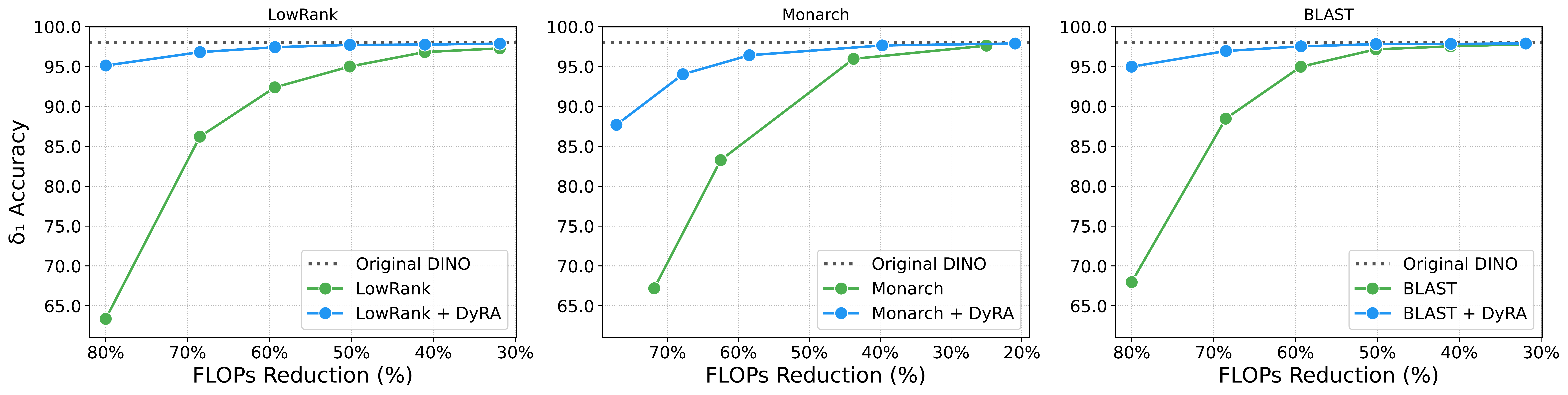}
        \caption{Depth Estimation on NYUv2.}
        \label{fig:second_result}
    \end{subfigure}
    \caption{DyRA vs.\ static weight approximations on dense prediction with DINOv3. DyRA achieves a superior accuracy--efficiency trade-off for semantic segmentation and depth estimation.}
    \vspace{-1.5em}
    \label{fig:dino_dense}
\end{figure}

\paragraph{Speech Recognition}

\begin{wraptable}{r}{0.34\textwidth}
\vspace{-1.4em}
\centering
\caption{WER under different FLOPs reduction ratios (FRR).}
\label{tab:asr}
\resizebox{\linewidth}{!}{
\begin{tabular}{c|l|c}
\hline
FRR & Method & WER($\downarrow$) \\
\hline
0\%
& Original & 1.92 \\
\hline
\multirow{4}{*}{25\%}
& Low-rank        & 2.21 \\
& Low-rank + DyRA & \textbf{2.08} \\
& BLAST           & 2.07 \\
& BLAST + DyRA    & \textbf{2.03} \\
\hline
\multirow{4}{*}{50\%}
& Low-rank        & 24.54 \\
& Low-rank + DyRA & \textbf{4.89} \\
& BLAST           & 6.48 \\
& BLAST + DyRA    & \textbf{4.27} \\
\hline
\end{tabular}
}
\vspace{-1em}
\end{wraptable}

We evaluate DyRA on automatic speech recognition using a pre-trained Wav2Vec 2.0 Large model~\cite{wav2vec2}.
We apply DyRA to all MLP layers and the query/key projections, and measure word error rate (WER) under different FLOPs reduction ratios.
As shown in Table~\ref{tab:asr} on the LibriSpeech Clean test set~\cite{libri}, DyRA consistently achieves lower WER than static weight approximation baselines under matched computational budgets.
These results demonstrate that, similar to the vision domains, combining DyRA with structured weight approximations is more effective than using weight approximation alone, and generalizes well to the audio domain.

\paragraph{Language Modeling}
We evaluate DyRA on language modeling using LLaDA-8B~\cite{llada}, a large language diffusion model.
We apply DyRA to all MLP layers and the query/key projections, and evaluate performance on HumanEval~\cite{humaneval} for code generation and GSM8K~\cite{gsm8k} for mathematical reasoning.
As shown in Figure~\ref{fig:large}, the low-rank baseline performs poorly on both tasks, with accuracy below 4\%.
In contrast, DyRA improves performance by more than 20 percentage points in both cases under the same computational budget, achieving results substantially closer to the original model.
These results demonstrate that DyRA makes structured weight approximation usable for large language diffusion models, even when the FLOPs-matched low-rank baseline largely fails.

\subsection{Runtime Analysis}
\label{section:efficiency}
\begin{wrapfigure}{r}{0.45\textwidth}
    \vspace{-1.8em}
    \centering
    \includegraphics[width=0.43\textwidth]{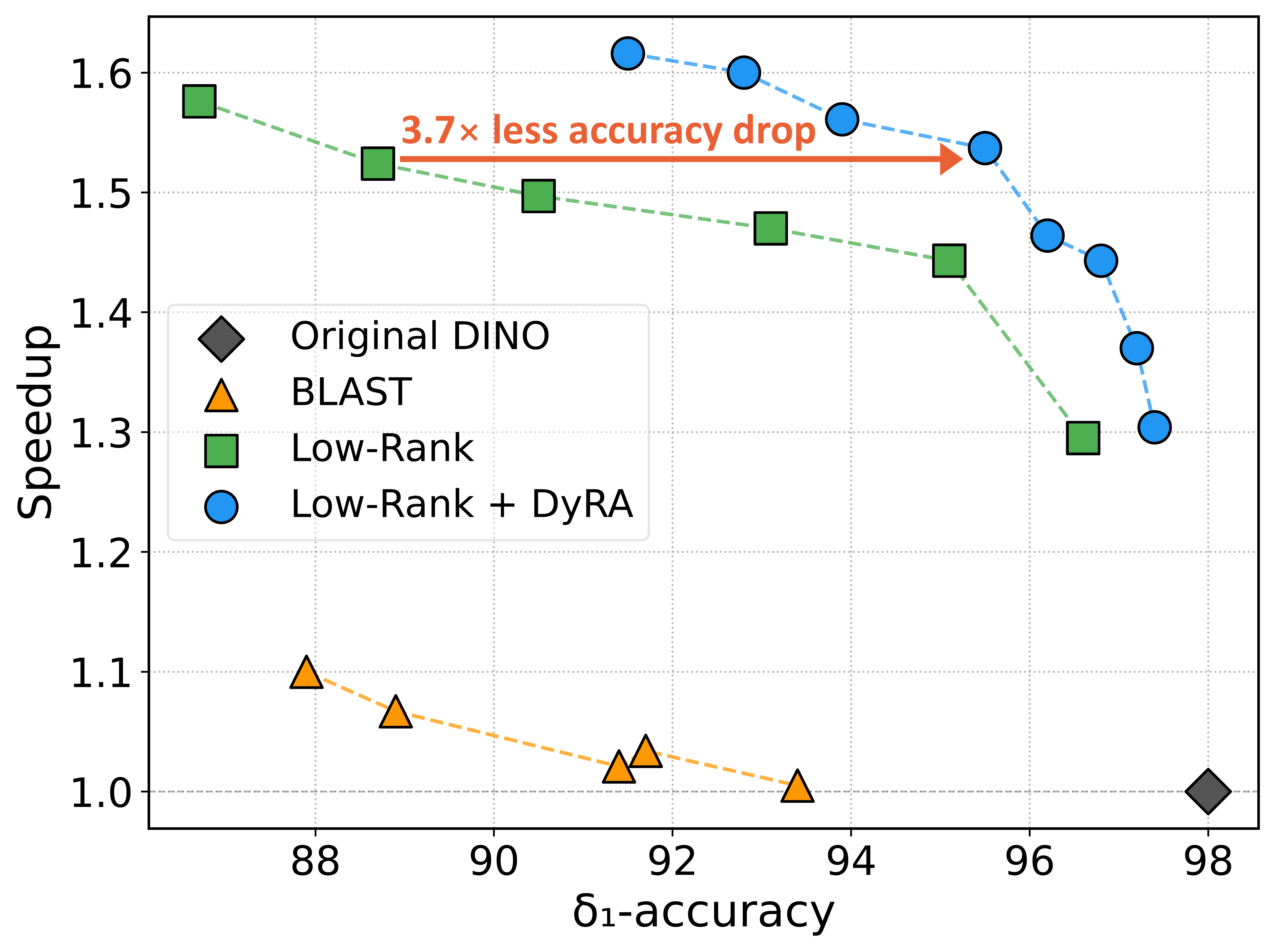}
    \caption{End-to-end latency vs.\ accuracy on DINOv3 depth estimation.}
    \label{fig:runtime}
    \vspace{-1.4em}
\end{wrapfigure}
We evaluate whether the computational savings of DyRA translate into actual end-to-end speedups on a GPU.
To this end, we apply DyRA to DINOv3 and compare it against static weight approximation baselines, including low-rank and BLAST.
Following Section~\ref{sec:dyra_kernel}, we use the fused Triton kernels for DyRA.
For the weight-only baselines, we use the faster option between the optimized structured kernels in~\cite{lowrank_kernel} and \texttt{torch.compile()} optimization of PyTorch~\cite{pytorch}.
We measure the end-to-end latency of depth estimation on the NVIDIA A40 GPU.

Figure~\ref{fig:runtime} shows the relative speedup across different depth-estimation accuracy levels. DyRA achieves a $1.3\times$ end-to-end speedup with only a small accuracy degradation of about $0.6\%$, demonstrating that its FLOPs reduction translates into practical runtime improvement. Compared with the static low-rank baseline, DyRA provides a substantially better accuracy--latency trade-off. It achieves larger speedups at similar accuracy levels, while the low-rank baseline incurs a $3.7\times$ larger accuracy drop to reach a $1.52\times$ speedup. Overall, these results show that DyRA provides practical speedups on the NVIDIA A40 GPU while maintaining a more favorable balance between speed and accuracy than static weight approximations.

\subsection{Further Evaluations with Augmented Methods}
\label{section:adaptation}
\paragraph{Comparison with Input-aware Decomposition}
We compare DyRA with input-aware decomposition~\cite{asvd,svdllm}, a post-training method that leverages calibration data to compute low-rank weight factors better aligned with output activations.
Figure~\ref{fig:combined} reports results on LLaDA for HumanEval and GSM8K, and on ViT for ImageNet classification.
Compared with vanilla low-rank decomposition, input-aware decomposition recovers part of the performance loss across these settings, confirming that activation information is useful for structured approximation.

However, since the decomposed weights are fixed after calibration, input-aware decomposition cannot adapt to the activations of each test sample.
DyRA provides this missing input-dependent correction at inference time, achieving a better accuracy--efficiency trade-off than input-aware decomposition alone.
Moreover, because DyRA operates as a correction on top of the structured computation, it can be combined with input-aware decomposition.
In LLaDA, combining DyRA with input-aware decomposition further improves performance on both HumanEval and GSM8K.
Similarly, on ImageNet classification at around 70\% FLOPs reduction, combining DyRA with input-aware decomposition further improves accuracy from 65.4\% to 68.8\% over DyRA with vanilla low-rank decomposition, showing that DyRA can further benefit from stronger calibrated static approximations.

\begin{figure}[t]
    \centering
    \begin{subfigure}[t]{0.67\textwidth}
        \centering
        \includegraphics[height=0.18\textheight]{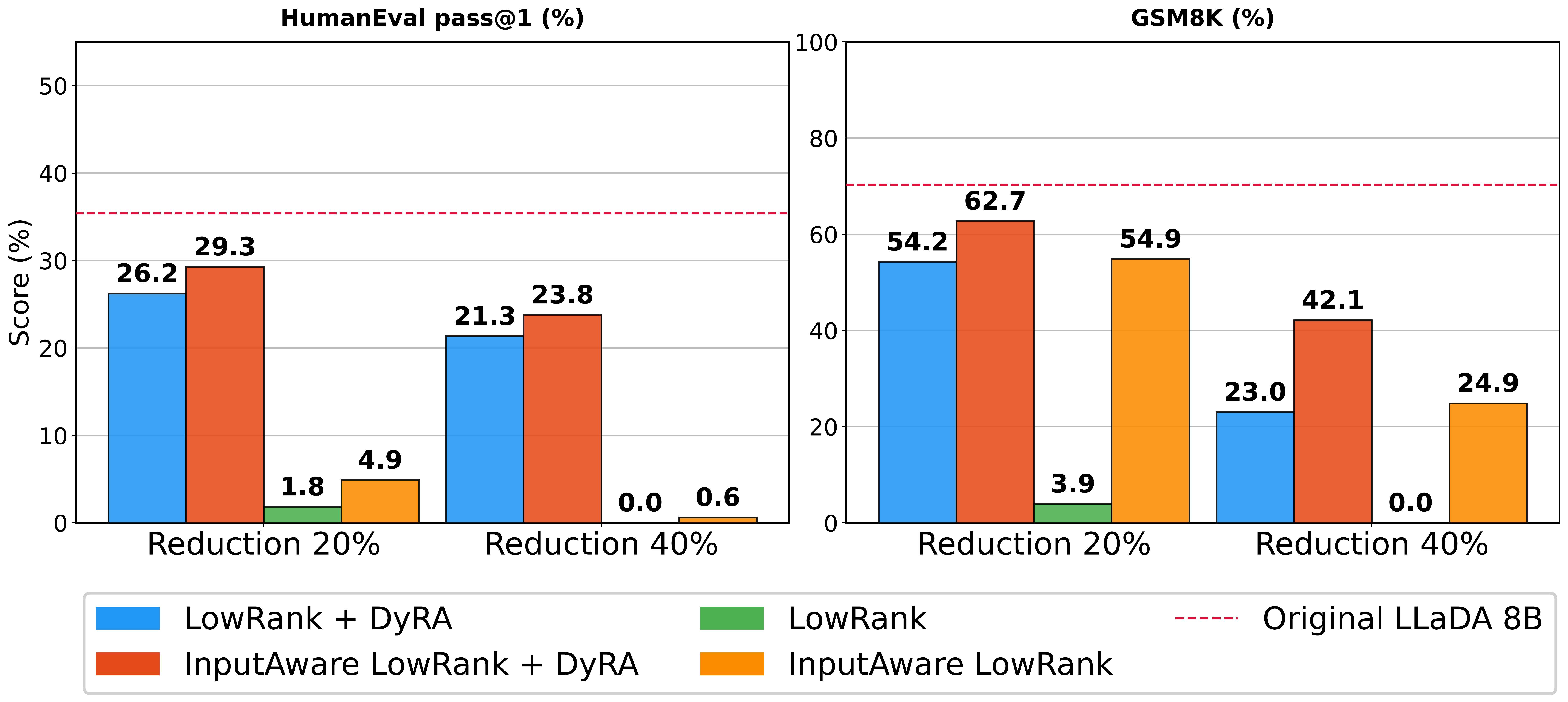}
        \caption{LLaDA 8B on HumanEval and GSM8K}
        \label{fig:large}
    \end{subfigure}
    \hfill
    \begin{subfigure}[t]{0.32\textwidth}
        \centering
        \includegraphics[height=0.18\textheight]{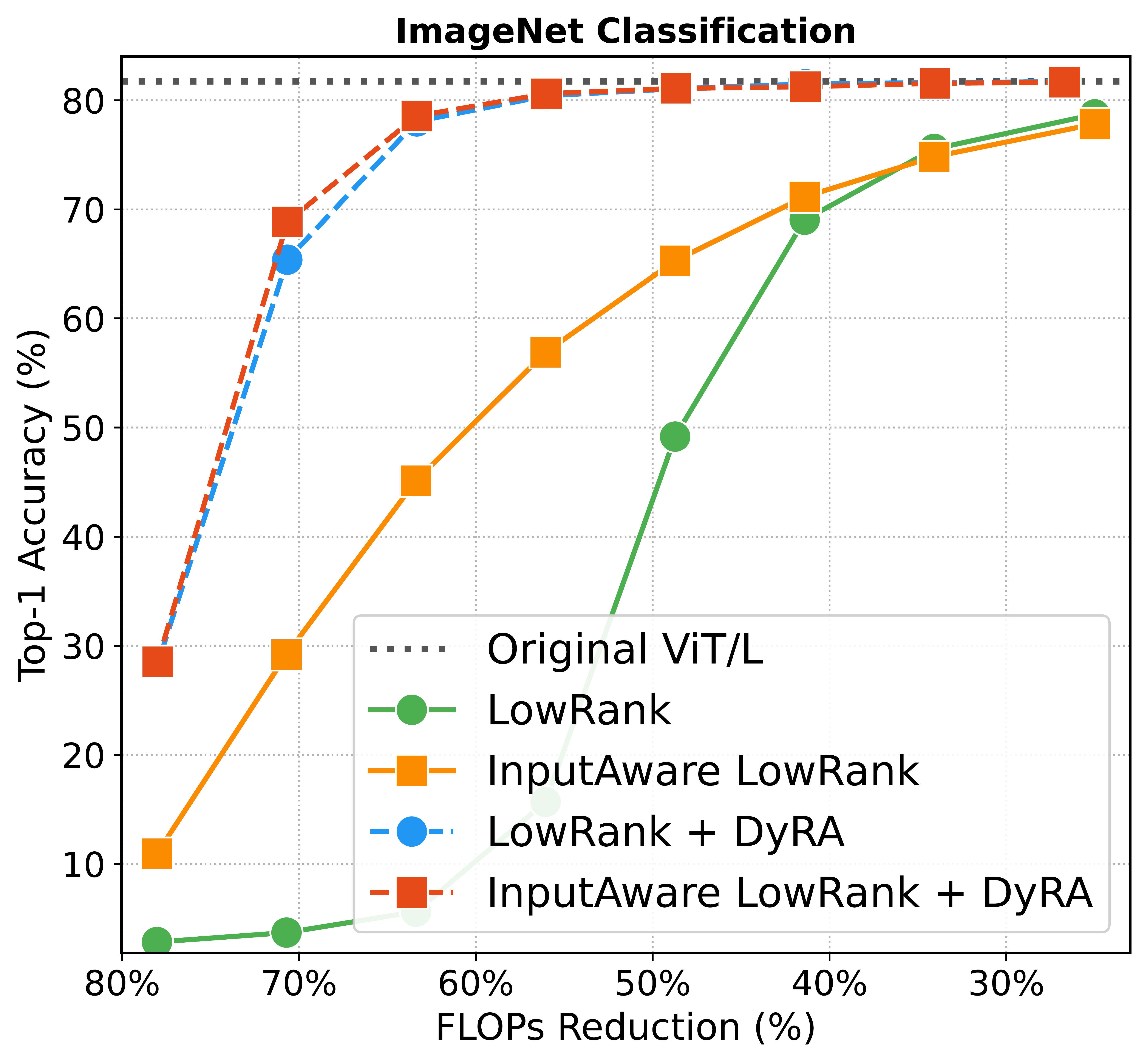}
        \caption{ViT-L on ImageNet-1K}
        \label{fig:small}
    \end{subfigure}
    \caption{Comparison between DyRA and input-aware decomposition on LLaDA-8B and ViT-L. DyRA generally achieves better performance, and can further improve results by using input-aware decomposition as the static approximation.}
    \label{fig:combined}
    \vspace{-1em}
\end{figure}

\paragraph{Fine-tuning Results}
\begin{wrapfigure}{r}{0.35\textwidth}
    \vspace{-1em}
    \centering
    \includegraphics[width=0.34\textwidth]{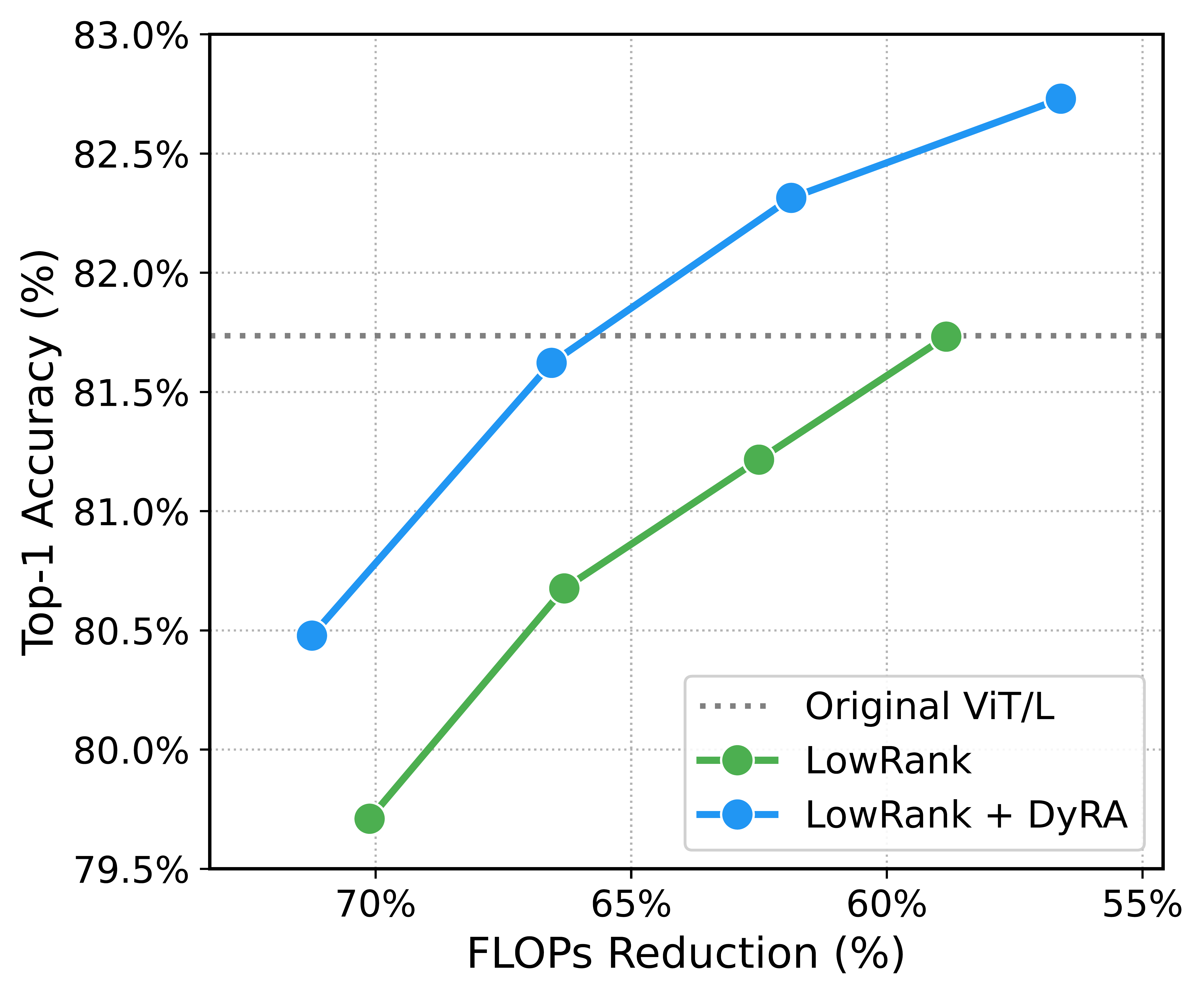}
    \caption{Fine-tuning results on ImageNet classification with ViT-L.}
    \label{fig:finetune}
    \vspace{-1em}
\end{wrapfigure}
Beyond combining DyRA with input-aware decomposition, we further evaluate DyRA under direct fine-tuning on ImageNet classification. In this experiment, we replace all linear layers of a ViT-L model with DyRA-augmented low-rank approximations and fine-tune the resulting model on ImageNet for 15 epochs. For comparison, we fine-tune a low-rank weight-only baseline under the same setting. As shown in Figure~\ref{fig:finetune},  consistent with the test-time approximation evaluation in prior subsections, DyRA with fine-tuning outperforms the static weight approximation baseline across different FLOPs budgets. Notably, at around 60\% FLOPs reduction, DyRA even slightly surpasses the original dense model while using substantially fewer FLOPs. These results suggest that DyRA remains effective after fine-tuning.

\paragraph{Ablation Study}
We analyze key components of DyRA: the residual rank $r$ and the number of ALS iterations $t$ for an ablation study. We conduct experiments on ViT image classification by combining DyRA with a low-rank weight approximation and varying each hyperparameter. Figure~\ref{fig:ablation} (left) reports the effect of the residual rank, evaluated with $r \in \{4,8,16,24\}$. DyRA consistently outperforms the corresponding low-rank static weight approximation baseline across all choices of $r$, showing that allocating even a small budget to the dynamic residual is beneficial and that performance is relatively robust to the choice of $r$. We then evaluate the effect of the ALS iteration count with $t \in \{1,1.5,2\}$ in Figure~\ref{fig:ablation} (right), where $t=1.5$ denotes the update sequence $L,R,L$. It shows that all configurations substantially improve over the static approximation baseline. Notably, $t=1$ already achieves strong performance, suggesting that most of the gain comes from a single iteration. This supports our default choice of $t=1$, which provides a favorable trade-off between accuracy and computational cost.

\begin{figure}[t]
    \begin{center}
    \includegraphics[width=1.0\linewidth]{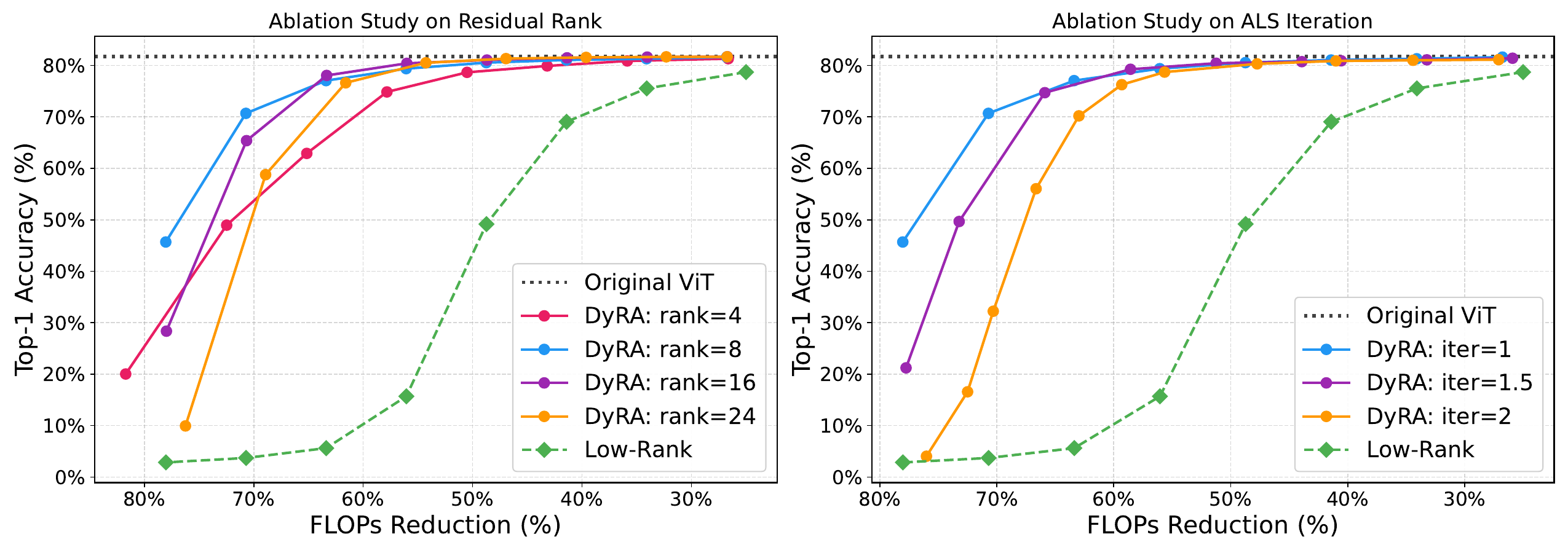}
    \end{center}
    \caption{Ablation study on ImageNet classification with ViT-L. Left: DyRA with different residual ranks $r$. Right: DyRA with different ALS iteration counts $t$.}
    \vspace{-1em}
    \label{fig:ablation}
\end{figure}

\section{Related Works}
\paragraph{Structured Matrices for Efficient DNN Inference}
Structured matrices have been widely used to reduce the cost of dense linear layers in deep neural networks. Prior studies~\cite{welore, sgd_lowrank, lowrank_overview,lora} have observed that pretrained weight matrices often exhibit low-rank structure, motivating low-rank approximation as a simple compression strategy~\cite{transformer_lowrank}. More expressive block-structured variants, such as Monarch~\cite{monarch} and BLAST~\cite{blast}, improve the trade-off between efficiency and accuracy, while methods such as GBLR~\cite{gblr} and Einsum Linear~\cite{einsum_linear} learn efficient weight structures directly. Another line of work incorporates activation information from calibration data to obtain input-aware weight decompositions~\cite{asvd,svdllm,feature_lowrank, drone}. However, these methods still produce a static approximation before inference, and cannot adapt to the sample-dependent error induced by the current input. Our method addresses this limitation by approximating the realized output error on the fly for each input. Importantly, our method can be combined with advanced structured or input-aware decompositions to obtain a more faithful matrix multiplication under a fixed computational budget.

\paragraph{Low-Rank Approximation via Alternating Least Squares}
Alternating least squares (ALS) has been studied as an efficient method for low-rank approximation and reconstruction, as it reduces a rank-constrained problem to a sequence of tractable least-squares subproblems. Prior work has used ALS for low-rank matrix reconstruction from underdetermined measurements~\cite{als_lowrank}, structured low-rank recovery with ADMM-style updates~\cite{als_lowrank_good}, and spectral analyses connected to orthogonal iteration~\cite{als_bound}. In contrast, we use ALS to approximate matrix products directly, targeting efficient matrix multiplication rather than reconstruction of a single matrix.

\section{Conclusion}
This work shows that the accuracy--efficiency trade-off of matrix multiplication in pre-trained DNNs can be improved by approximating the realized output product, rather than only the weight matrix. We formulate this output-domain approximation as a low-rank factorization problem and solve it efficiently with ALS during model inference. Building on this formulation, we propose DyRA, which augments static weight approximations with an input-adaptive residual correction. Our experiments show that this correction consistently improves the accuracy--efficiency trade-off across low-rank, Monarch, and BLAST approximations. In particular, DyRA preserves dense prediction performance under aggressive FLOPs reduction and translates these savings into practical speedups on a GPU.

\paragraph{Limitations and Future Work.} DyRA is designed to accelerate dense matrix multiplication by combining dynamic residual approximation with efficient structured weight approximations. As a result, it is most effective in regimes with large matrix--matrix multiplications, whereas token-by-token vector--matrix workloads such as autoregressive decoding are not its primary target application. Nevertheless, DyRA remains applicable to the prefill stage of autoregressive language models, where the full prompt induces large matrix multiplications. We provide supporting results in the supplementary material (Appendix~\ref{sup:llama}).

Finally, as a test-time acceleration approach, DyRA offers additional flexibility compared to static approximation methods whose rank must be fixed offline. In principle, the dynamic residual rank can be adjusted based on the input or computational budget at inference time. Exploring such adaptive rank allocation is left for future work.


\section*{Acknowledgments}
This work was in part supported by MSTAR automotive microelectronics research award \#N036453, and also by the Center on Cognitive Multispectral Sensors (COGNISENSE), one of seven centers in Joint University Microelectronics Program (JUMP) 2.0, a Semiconductor Research Corporation (SRC) program sponsored by Defense Advanced Research Projects Agency (DARPA). Daewon Chae would like to acknowledge the support from the Hyundai Motor Chung Mong-Koo Foundation.

\bibliographystyle{plainnat}
\bibliography{ref}


\newpage
\appendix

\section{DyRA Algorithm}
Algorithm~\ref{alg:dyra} summarizes the overall DyRA procedure, including offline preparation and inference-time dynamic residual correction.

\begin{algorithm}[h]
\caption{DyRA with One-Step ALS Residual Approximation}
\label{alg:dyra}
\begin{algorithmic}[1]

\Procedure{Offline Preparation}{$\mW, k, r, \lambda$}
    \State Compute a structured approximation $\widehat{\mW}$ of $\mW$
    \State $\mW_{\mathrm{res}} \gets \mW - \widehat{\mW}$
    \State Initialize and fix $\mR^{(0)} \in \mathbb{R}^{r \times c_2}$
    \State Precompute $\mP_L \gets
    \left(\mW_{\mathrm{res}}\mR^{(0)\top}\right)
    \left(\mR^{(0)}\mR^{(0)\top}+\lambda\mI_r\right)^{-1}$
    \State \Return $\widehat{\mW}, \mW_{\mathrm{res}}, \mP_L$
\EndProcedure

\Statex

\Procedure{Inference}{$\mX, \widehat{\mW}, \mW_{\mathrm{res}}, \mP_L, \lambda$}
    \State $\mY_{\mathrm{base}} \gets \mX\widehat{\mW}$
    \State $\mL^{(1)} \gets \mX\mP_L$
    \State $\mR^{(1)} \gets
    \left(\mL^{(1)\top}\mL^{(1)}+\lambda\mI_r\right)^{-1}
    \left(\mL^{(1)\top}\mX\right)\mW_{\mathrm{res}}$
    \State $\widehat{\mY} \gets \mY_{\mathrm{base}}+\mL^{(1)}\mR^{(1)}$
    \State \Return $\widehat{\mY}$
\EndProcedure

\end{algorithmic}
\end{algorithm}

\section{Background on Structured Matrices}
\paragraph{Low-Rank Approximation}
Low-rank approximation represents a weight matrix $\mW\in\mathbb{R}^{d\times m}$ as the product of two smaller matrices,
\[
    \widehat{\mW}=\mL\mR,
    \qquad
    \mL\in\mathbb{R}^{d\times r},\quad
    \mR\in\mathbb{R}^{r\times m},
\]
where $r\ll\min(d,m)$.
A standard choice is the rank-$r$ truncated singular value decomposition (SVD).
Given $\mW=\mU\mSigma\mV^\top$, the approximation is
\[
    \widehat{\mW}
    =
    \mU_r\mSigma_r\mV_r^\top,
    \qquad
    \mL=\mU_r\mSigma_r,\quad
    \mR=\mV_r^\top .
\]
This approximation keeps the top-$r$ singular components of $\mW$, providing the best rank-$r$ approximation under the Frobenius norm.

\paragraph{Monarch Matrix}
Monarch matrices~\cite{monarch} represent a weight matrix using two block-diagonal factors with an intermediate permutation.
For $\mW\in\mathbb{R}^{N\times N}$ with $N=b_1b_2$, the approximation is written as
\[
    \widehat{\mW}
    =
    \mP \mL \mP^\top \mR,
    \qquad
    \mL=\mathrm{diag}(\mL_1,\dots,\mL_{b_2}),
    \qquad
    \mR=\mathrm{diag}(\mR_1,\dots,\mR_{b_1}),
\]
where $\mL_i\in\mathbb{R}^{b_1\times b_1}$ for $i\in[b_2]$ and
$\mR_j\in\mathbb{R}^{b_2\times b_2}$ for $j\in[b_1]$.
Here, $\mP\in\mathbb{R}^{N\times N}$ is a fixed permutation matrix that reorders the block structure, so that the overall product induces a structured block-wise pattern in $\widehat{\mW}$.
In practice, the factors can be obtained by reshaping the dense weight into blocks, applying a truncated SVD to each block, and rearranging the resulting factors.

\paragraph{BLAST}
BLAST~\cite{blast} represents a weight matrix using shared block-row and block-column bases.
For $\mW\in\mathbb{R}^{d\times m}$, we partition the matrix into a grid of blocks $\mW_{i,j}\in\mathbb{R}^{p\times q}$.
Each block is parameterized as
\[
    \widehat{\mW}_{i,j}
    =
    \mU_i \mS_{i,j} \mV_j^\top,
    \qquad
    \mU_i\in\mathbb{R}^{p\times r},\quad
    \mS_{i,j}\in\mathbb{R}^{r\times r},\quad
    \mV_j\in\mathbb{R}^{q\times r}.
\]
Here, $\mU_i$ is shared by all blocks in the same block row, and $\mV_j$ is shared by all blocks in the same block column.
The middle factor $\mS_{i,j}$ is specific to each block and controls how the shared row and column bases are combined.
This parameterization captures low-dimensional structure at the block level while reusing bases across the matrix.
In practice, the BLAST factors are obtained by minimizing the reconstruction error between $\mW$ and $\widehat{\mW}$.

\paragraph{Input-Aware Decomposition}
Input-aware decomposition aims to better preserve output activations by incorporating calibration inputs into the weight approximation.
Given a calibration set $\mathcal{S}=\{\mX_s\}_{s=1}^{|\mathcal{S}|}$, these methods approximate $\mW$ by reducing the output-space discrepancy measured on the calibration set,
$\sum_{\mX_s\in\mathcal{S}}\|\mX_s\mW-\mX_s\widehat{\mW}\|_F^2$.
A common formulation introduces an invertible transformation matrix $\mS$ and rewrites the weight as
$\mW=(\mW\mS)\mS^{-1}$.
Truncated SVD is then applied to the transformed weight,
\[
    \mW\mS
    \approx
    \mU_r\mSigma_r\mV_r^\top,
\]
yielding the approximation
\begin{equation}
    \widehat{\mW}
    =
    \mU_r\mSigma_r\mV_r^\top\mS^{-1}.
    \label{eq:aware_svd}
\end{equation}

The transform $\mS$ determines how calibration activations influence the decomposition.
For example, $\mS$ can be chosen as a diagonal scaling matrix based on channel-wise activation magnitudes~\cite{asvd}, or as a Cholesky factor of the input covariance matrix computed from the calibration set~\cite{asvd,svdllm}.

\section{Proof of Proposition~\ref{proposition1}}
\label{sup:proof_subspace}
\begin{proof}
We start from the one-step DyRA approximation in Eq.~\ref{eq:drama_final},
\begin{equation}
    \mX\mW
    \approx
    \mX\widehat{\mW}+\mL^{(1)}\mR^{(1)} .
\end{equation}
In the low-rank case, the static approximation is
\begin{equation}
    \widehat{\mW}
    =
    \mU_{1:k}\mSigma_{1:k}\mV_{1:k}^\top,
\end{equation}
and the residual is
\begin{equation}
    \mW_{\mathrm{res}}
    =
    \mW-\widehat{\mW}
    =
    \mU_{k+1:}\mSigma_{k+1:}\mV_{k+1:}^\top .
\end{equation}

We initialize the right factor as $\mR^{(0)}=\mV_{k+1:k+r}^\top$. Since the singular vectors are orthonormal, $\mR^{(0)}\mR^{(0)\top}=\mI$. Substituting this into the left-factor update in Eq.~\ref{eq:L_update} yields
\begin{equation}
\begin{aligned}
    \mL^{(1)}
    &=
    \mX
    \left(\mW_{\mathrm{res}}\mR^{(0)\top}\right)
    \left(\mR^{(0)}\mR^{(0)\top}\right)^{-1} \\
    &=
    \mX
    \left(
    \mU_{k+1:}\mSigma_{k+1:}\mV_{k+1:}^\top
    \mV_{k+1:k+r}
    \right) \\
    &=
    \mX\mU_{k+1:k+r}\mSigma_{k+1:k+r}.
\end{aligned}
\end{equation}
Let $\mA_r^{\mathrm{res}}=\mU_{k+1:k+r}\mSigma_{k+1:k+r}$, so that $\mL^{(1)}=\mX\mA_r^{\mathrm{res}}$.

Substituting this into the right-factor update in Eq.~\ref{eq:R_update}, we obtain
\begin{equation}
\begin{aligned}
    \mR^{(1)}
    &=
    \left(\mL^{(1)\top}\mL^{(1)}\right)^{-1}
    \left(\mL^{(1)\top}\mX\right)\mW_{\mathrm{res}} \\
    &=
    \left(
    \mA_r^{\mathrm{res}\top}\mX^\top\mX\mA_r^{\mathrm{res}}
    \right)^{-1}
    \mA_r^{\mathrm{res}\top}\mX^\top\mX\mW_{\mathrm{res}} .
\end{aligned}
\end{equation}

Plugging the resulting one-step factors into Eq.~\ref{eq:drama_final}, we obtain
\begin{equation}
\begin{aligned}
    \mX\mW
    &\approx
    \mX\widehat{\mW}
    +
    \mL^{(1)}\mR^{(1)} \\
    &=
    \mX\mU_{1:k}\mSigma_{1:k}\mV_{1:k}^\top
    +
    \mX\mU_{k+1:k+r}\mSigma_{k+1:k+r}\mR^{(1)} \\
    &=
    \mX
    \left(
    \mU_{1:k}\mSigma_{1:k}\mV_{1:k}^\top
    +
    \mU_{k+1:k+r}\mSigma_{k+1:k+r}\mR^{(1)}
    \right) \\
    &=
    \mX
    \begin{bmatrix}
        \mU_{1:k}\mSigma_{1:k} &
        \mU_{k+1:k+r}\mSigma_{k+1:k+r}
    \end{bmatrix}
    \begin{bmatrix}
        \mV_{1:k}^\top \\
        \mR^{(1)}
    \end{bmatrix}.
\end{aligned}
\end{equation}
This proves the stated form.

Finally, if $\mX^\top\mX=\mI$, then
\begin{equation}
\begin{aligned}
    \mR^{(1)}
    &=
    \left(
    \mA_r^{\mathrm{res}\top}\mA_r^{\mathrm{res}}
    \right)^{-1}
    \mA_r^{\mathrm{res}\top}\mW_{\mathrm{res}}.
\end{aligned}
\end{equation}
Using orthogonality of singular vectors,
\[
\mA_r^{\mathrm{res}\top}\mA_r^{\mathrm{res}}
=
\mSigma_{k+1:k+r}^{2}, \qquad
\mA_r^{\mathrm{res}\top}\mW_{\mathrm{res}}
=
\mSigma_{k+1:k+r}^{2}\mV_{k+1:k+r}^\top,
\]
which gives $\mR^{(1)}=\mV_{k+1:k+r}^\top$.
Thus, DyRA reduces to the static rank-$(k+r)$ truncated SVD in the isotropic case.
\end{proof}

\section{Experimental Details}
\label{sup:exp_detail}
In this section, we provide additional details on the experimental setup.
Unless otherwise specified, we use one ALS iteration for DyRA across all experiments.
We initialize $\mR^{(0)}$ with the top right singular vectors of the residual weight $\mW_{\mathrm{res}}=\mW-\widehat{\mW}$, following the subspace interpretation in Section~\ref{paragraph:subspace}.
This initialization is performed once offline before inference and does not introduce input-dependent overhead.

\paragraph{Image Classification}
For image classification, we evaluate DyRA on ImageNet-1K using a ViT-L/32 model\footnote{\url{https://huggingface.co/google/vit-large-patch32-384}}~\cite{vit}~ pretrained on ImageNet-21K~\cite{imagenet21k}.
We apply DyRA to all linear layers in the model, including the MLP layers and the query, key, value, and output projections in the attention blocks.
We fix the rank of the dynamic residual approximation to $r=8$ and control the FLOPs by varying the configuration of the static weight approximation.

For low-rank approximation, we vary the static rank over $\{100,150,200,250,300,350,400,450\}$.
For Monarch, we vary the Monarch block size over $\{3,4,6,8\}$.
For BLAST, we fix the block partition to $4$ and vary the static rank over $\{100,150,200,250,300,350,400,450\}$.
For each structured approximation, we construct the corresponding weight-only baseline by sweeping its configuration to match the FLOPs of DyRA.

\paragraph{Dense Prediction}
For dense prediction, we evaluate DyRA on semantic segmentation and depth estimation using a ViT-7B-based DINOv3 model~\cite{dinov3}.
We evaluate semantic segmentation on ADE20K~\cite{ade20k} and depth estimation on NYUv2~\cite{nyuv2}.
For both tasks, we follow the image preprocessing and decoding protocol of DINOv3~\footnote{\url{https://github.com/facebookresearch/dinov3}}.
We apply DyRA uniformly to all linear layers in the model.

We fix the rank of the dynamic residual approximation to $r=48$ and vary the configuration of the static weight approximation to control the FLOPs reduction.
For depth estimation, we sweep the low-rank approximation rank over $\{436,750,1000,1250,1500,1750\}$, the BLAST rank over $\{433,746,995,1244,1492,1741\}$, and the Monarch block size over $\{2,3,4,6,8\}$.
For semantic segmentation, we sweep the low-rank approximation rank over $\{750,1000,1250,1500,1750\}$, the BLAST rank over $\{746,995,1244,1492,1741\}$, and the Monarch block size over $\{2,3,4,6,8\}$.
The corresponding weight-only baselines are obtained by adjusting the same configuration parameters to match the FLOPs of DyRA.

\paragraph{Speech Recognition}
For speech recognition, we evaluate DyRA on Wav2Vec2-Large\footnote{\url{https://huggingface.co/facebook/wav2vec2-large-960h-lv60-self}}~\cite{wav2vec2}, pretrained and fine-tuned on 960 hours of Libri-Light~\cite{libri_light} and LibriSpeech~\cite{libri}.
We report word error rate (WER) on the LibriSpeech test set, following the standard audio preprocessing and CTC-based decoding protocol used for Wav2Vec2.
DyRA is applied to the query and key projections in the attention blocks and to the two fully connected layers in each MLP block.
For both DyRA and the corresponding weight-only baselines, rank parameters are selected by grid search over a predefined set of configurations under target FLOPs reduction ratios of 25\% and 50\%.

We summarize each DyRA configuration as
$(r_{\mathrm{MLP}}, r_{\mathrm{attn}}, r_{\mathrm{res}})$,
where $r_{\mathrm{MLP}}$, $r_{\mathrm{attn}}$, and $r_{\mathrm{res}}$ denote the MLP rank, attention rank, and residual rank, respectively.
The selected configurations are $(370,210,48)$ and $(200,60,36)$ for BLAST at 25\% and 50\% FLOPs reduction, respectively, and $(440,160,44)$ and $(200,90,36)$ for low-rank at 25\% and 50\% FLOPs reduction, respectively.

\paragraph{Language Modeling}
For language modeling, we evaluate DyRA on LLaDA-8B\footnote{\url{https://huggingface.co/GSAI-ML/LLaDA-8B-Base}}~\cite{llada}, a large language diffusion model.
We consider two benchmarks: HumanEval~\cite{humaneval} for code generation and GSM8K~\cite{gsm8k} for mathematical reasoning.
We apply low-rank-based DyRA to the query and key projections in the attention blocks and to the MLP layers.
Following prior observations that query and key projections in LLMs are more amenable to compression~\cite{welore}, we use more aggressive FLOPs reduction for the query and key projections than for the MLP layers.

For HumanEval, we use two DyRA configurations targeting 20\% and 40\% total FLOPs reduction.
At 20\% reduction, we set the query/key static rank to 840 with residual rank 40, and the MLP static rank to 1000 with residual rank 150.
At 40\% reduction, we set the query/key static rank to 300 with residual rank 50, and the MLP static rank to 300 with residual rank 150.

For GSM8K, we again use two configurations targeting 20\% and 40\% total FLOPs reduction.
At 20\% reduction, we set the query/key static rank to 512 with residual rank 240, and the MLP static rank to 1024 with residual rank 380.
At 40\% reduction, we set the query/key static rank to 320 with residual rank 180, and the MLP static rank to 630 with residual rank 270.

\paragraph{Fine-tuning ViT on ImageNet}
For the fine-tuning experiment, we use a ViT-L/32 model pretrained on ImageNet-21K and fine-tune it on ImageNet-1K. We apply DyRA uniformly to all linear layers, including the MLP layers and all attention projections. Each model is fine-tuned for 15 epochs using the AdamW optimizer~\cite{adamw} with a learning rate of $2\times 10^{-5}$ and a batch size of 1024.

We fix the dynamic residual approximation rank to $r=16$ and vary the static weight rank over $\{96,128,160,196\}$, training a separate DyRA model for each static rank.
For each DyRA configuration, we construct a FLOPs-matched static low-rank baseline by adjusting its rank and fine-tune the corresponding baseline model using the same training setup.

\paragraph{Comparison with Input-Aware Decomposition.}
For comparison with prior input-aware decomposition methods, we use the approaches proposed in~\cite{asvd,svdllm}.
In Eq.~\ref{eq:aware_svd}, we instantiate $\mS$ as the Cholesky factor of the input covariance matrix computed from calibration activations.
Since input-aware decompositions can be sensitive to the choice of calibration data, we use the corresponding training split of each evaluation dataset to provide a strong baseline setting.
Specifically, we use the ImageNet training set for ImageNet experiments and the GSM8K training set for GSM8K evaluation.
Because HumanEval consists only of evaluation problems without a dedicated training split, we use the GSM8K training set as the calibration set for HumanEval experiments.
For each target FLOPs reduction ratio, the ranks are kept identical to those of the corresponding DyRA configurations used in the language modeling experiments, except that the static low-rank factors are initialized using input-aware decomposition.

\section{Additional Experiment Results}

\begin{table}[t]
    \centering
    \small
    \caption{
        Perplexity comparison on language modeling benchmarks without fine-tuning.
        Lower is better.
        Bold indicates the best performance under the same compression ratio.
    }
    \label{tab:ppl_results}
    \vspace{1em}
    \resizebox{0.65\linewidth}{!}{
    \begin{tabular}{c|l|ccc}
        \toprule
        \textbf{CR}
            & \textbf{Method}
            & \textbf{WikiText}
            & \textbf{PG-19}
            & \textbf{C4} \\
        \midrule
        0\%
            & Dense
            & \textbf{9.38}
            & \textbf{10.07}
            & \textbf{7.01} \\
        \midrule
        \multirow{3}{*}{20\%}
        & Low-Rank
            & 16.00(+6.62)
            & 15.93(+5.86)
            & 10.81(+3.80) \\
        & BLAST
            & 12.94(+3.56)
            & 13.05(+2.98)
            & 9.26(+2.25) \\
        & Low-Rank + DyRA
            & \textbf{12.03(+2.65)}
            & \textbf{12.48(+2.41)}
            & \textbf{8.75(+1.74)} \\
        \bottomrule
    \end{tabular}
    }
    \vspace{-1em}
\end{table}

\begin{table}[t]
    \centering
    \scriptsize
    \caption{
    Performance comparison across multiple-choice reasoning benchmarks.
    FR denotes the FLOP reduction ratio.
    Values in parentheses indicate accuracy changes relative to the dense model.
    Bold indicates the best performance under the same compression ratio.
    }
    \label{tab:llama_common}
    \vspace{1em}
    \setlength{\tabcolsep}{4pt}
    \begin{tabular}{c|l|cccccc|c}
        \toprule
        \textbf{FR}
            & \textbf{Method}
            & \textbf{PIQA}
            & \textbf{Wino}
            & \textbf{BoolQ}
            & \textbf{OBQA}
            & \textbf{ARC-e}
            & \textbf{ARC-c}
            & \textbf{Avg} \\
        \midrule
        0\%
            & Dense
            & \textbf{81.66}
            & \textbf{74.90}
            & \textbf{77.03}
            & 47.60
            & \textbf{80.18}
            & \textbf{50.85}
            & \textbf{68.70} \\
        \midrule
        \multirow{3}{*}{20\%}
        & Low-Rank
            & 76.66(-5.00)
            & 69.38(-5.52)
            & 69.42(-7.61)
            & 45.20(-2.40)
            & 73.53(-6.65)
            & 41.72(-9.13)
            & 62.65(-6.05) \\
        & BLAST
            & \textbf{78.56(-3.10)}
            & 67.64(-7.26)
            & 66.91(-10.12)
            & 44.80(-2.80)
            & 75.42(-4.76)
            & 43.43(-7.42)
            & 62.79(-5.91) \\
        & \textbf{Low-Rank }+ \textbf{DyRA}
            & 77.69(-3.97)
            & \textbf{69.93(-4.97)}
            & \textbf{71.50(-5.53)}
            & \textbf{48.60(+1.00)}
            & \textbf{76.05(-4.13)}
            & \textbf{44.11(-6.74)}
            & \textbf{64.65(-4.05)} \\
        \bottomrule
    \end{tabular}
\end{table}
\par

\subsection{Autoregressive LLM Prefilling}
\label{sup:llama}
In addition to the diffusion language model setting, we examine whether DyRA can also be applied to autoregressive LLMs, particularly in the prefilling regime where computation is dominated by large matrix multiplications over the input sequence.
Rather than measuring decoding throughput, we evaluate this setting through language modeling perplexity and multiple-choice reasoning benchmarks~\cite{piqa, wino,boolq,openbookqa,arc}, where each prompt and candidate continuation is processed in a forward pass.
To better reflect non-trivial prefilling workloads, we use few-shot evaluation prompts that provide extended input contexts for multiple-choice reasoning benchmarks.

We apply low-rank-based DyRA to the linear layers of LLaMA-7B~\cite{llama}, targeting a 20\% reduction in matrix multiplication FLOPs.
As shown in Tables~\ref{tab:ppl_results} and~\ref{tab:llama_common}, DyRA consistently improves over the corresponding low-rank approximation and also outperforms the BLAST baseline under the same FLOPs reduction.
These results suggest that DyRA can be effectively applied to the prefilling stage of autoregressive LLMs.

\begin{figure}[t]
    \begin{center}
    \includegraphics[width=1.0\linewidth]{figs/vitb.pdf}
    \end{center}
    \caption{Top-1 accuracy versus FLOPs reduction on ViT-B/16.
    Across low-rank, Monarch, and BLAST approximations, combining DyRA consistently improves the accuracy--efficiency trade-off over static weight approximation baselines.}
    \vspace{-1em}
    \label{fig:vitb}
\end{figure}

\subsection{Image Classification with ViT-B}
\label{sup:vitb}

To further examine whether DyRA remains effective across different model scales, we additionally evaluate image classification on ViT-B/16~\footnote{\url{https://huggingface.co/google/vit-base-patch16-224}}. 
Using the same experimental setup as in Section~\ref{sup:exp_detail}, we combine DyRA with static weight approximations and compare it against FLOPs-matched static baselines. 
As shown in Figure~\ref{fig:vitb}, DyRA consistently improves the accuracy--efficiency trade-off over the corresponding static weight approximations. 
These results demonstrate that DyRA's gains are not specific to the larger ViT-L model used in the main paper, but also generalize to a smaller ViT backbone.

\subsection{Convergence of ALS Residual Approximation}
\label{sup:convergence}
We provide additional details and results for the ALS convergence experiment discussed in Section~\ref{paragraph:convergence}.
We randomly sample 4,096 images from the ImageNet validation set and use them to extract the input activations $\mX$ at each linear layer of ViT-L.
For each layer, we construct the residual weight
$\mW_{\mathrm{res}}=\mW-\widehat{\mW}$,
where $\widehat{\mW}$ is obtained via a rank-196 truncated SVD of $\mW$.
We then apply ALS to approximate the residual product $\mX\mW_{\mathrm{res}}$ with target rank $r=16$.

At each ALS iteration $t$, we compute the relative Frobenius error between the residual product and its low-rank approximation:
$\frac{\|\mX\mW_{\mathrm{res}}-\mL^{(t)}\mR^{(t)}\|_F}{\|\mX\mW_{\mathrm{res}}\|_F}$.

We compare this error with the optimal rank-$r$ error obtained by applying SVD directly to $\mX\mW_{\mathrm{res}}$.

Figure~\ref{fig:als_converge_full} reports the convergence results averaged over the sampled images across all ViT linear layers. ALS rapidly approaches the optimal rank-$r$ error in nearly all layers, typically within two iterations.

\begin{figure}[t]
    \centering
    \begin{subfigure}{1.0\linewidth}
        \centering
        \includegraphics[width=1.0\linewidth]{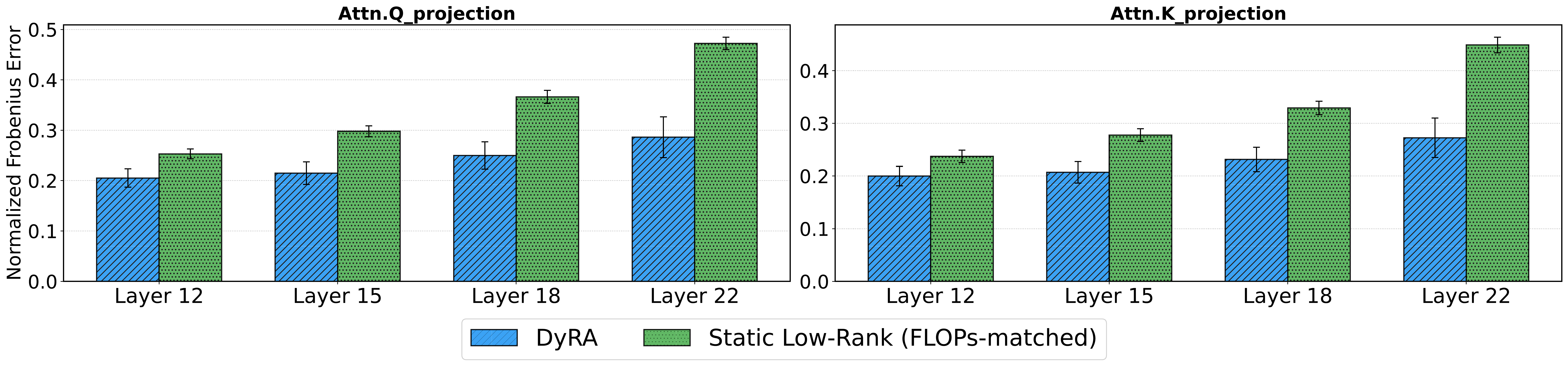}
        \caption{Attention query and key projections.}
        \label{fig:flops_matched_error_qk}
    \end{subfigure}

    \begin{subfigure}{1.0\linewidth}
        \centering
        \includegraphics[width=1.0\linewidth]{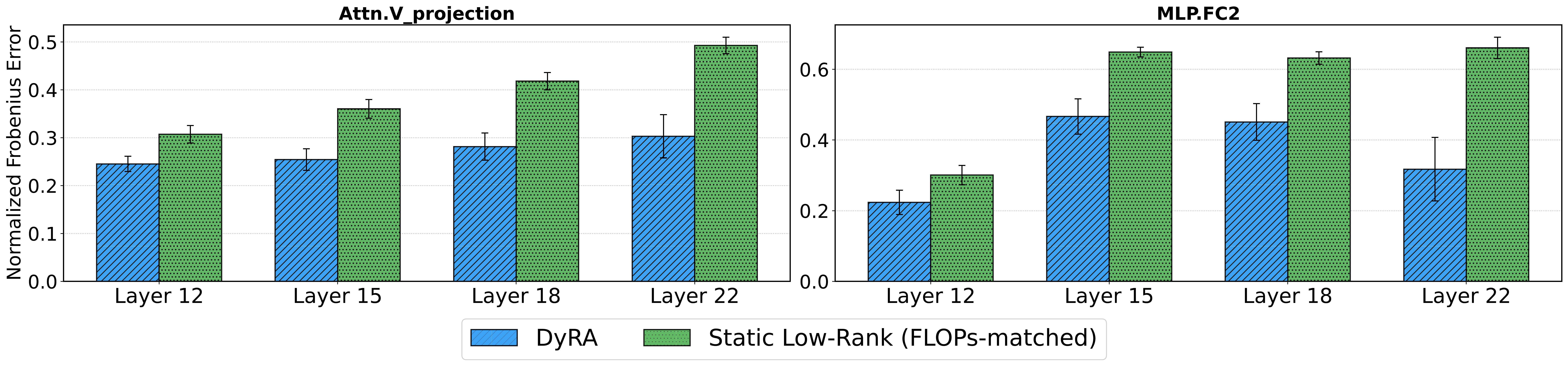}
        \caption{Attention value projection and second MLP linear layer.}
        \label{fig:flops_matched_error_vfc2}
    \end{subfigure}
    \caption{
    Additional FLOPs-matched output approximation error results on ViT-L sublayers omitted from the main paper. DyRA consistently achieves lower normalized Frobenius output error than the static low-rank baseline.
    }
    \vspace{-1.5em}
    \label{fig:flops_matched_error_full}
\end{figure}

\subsection{FLOPs-Matched Output Approximation Error}
\label{sup:flops_matched_error}

We provide additional details and results for the FLOPs-matched output error comparison discussed in Section~\ref{paragraph:error_comparison}.
Using the same ViT-L and ImageNet validation setup as in Section~\ref{sup:convergence}, we collect input activations $\mX$ from 4,096 validation images for each target linear layer.
For each layer, we compute a rank-196 truncated SVD approximation $\widehat{\mW}$ and apply a rank-16 DyRA correction to the residual product $\mX(\mW-\widehat{\mW})$ using one ALS iteration.

For the FLOPs-matched static baseline, we increase the low-rank approximation rank so that its exact FLOPs match the total FLOPs of DyRA, including the dynamic correction overhead.
We then compare the remaining normalized Frobenius output error, $\|\mX\mW-\widehat{\mY}\|_F / \|\mX\mW\|_F$, where $\widehat{\mY}$ denotes the approximated output.

Figure~\ref{fig:flops_matched_error_full} reports additional results for the remaining sublayers not shown in the main text: the query, key, and value projections in attention, and the second MLP linear layer.
Across these additional sublayers, DyRA consistently yields lower output approximation error than the FLOPs-matched higher-rank static low-rank baseline.

\section{Broader Impacts}
\label{sup:impact}
DyRA is a test-time inference acceleration method designed to reduce the computational cost of neural network inference. Its potential positive societal impact lies in improving the resource efficiency of AI systems, enabling model deployment under limited computational budgets and reducing the energy costs associated with inference. Since DyRA does not introduce a new application domain or change the intended use of the underlying model beyond approximation, we do not identify negative societal impacts specific to the proposed method. However, by making inference more efficient, DyRA may facilitate broader deployment of existing AI systems, and therefore may inherit the societal risks associated with those systems and their applications.

\clearpage
\begin{figure}[t]
    \centering
    \begin{subfigure}{1.0\linewidth}
        \centering
        \includegraphics[width=1.0\linewidth]{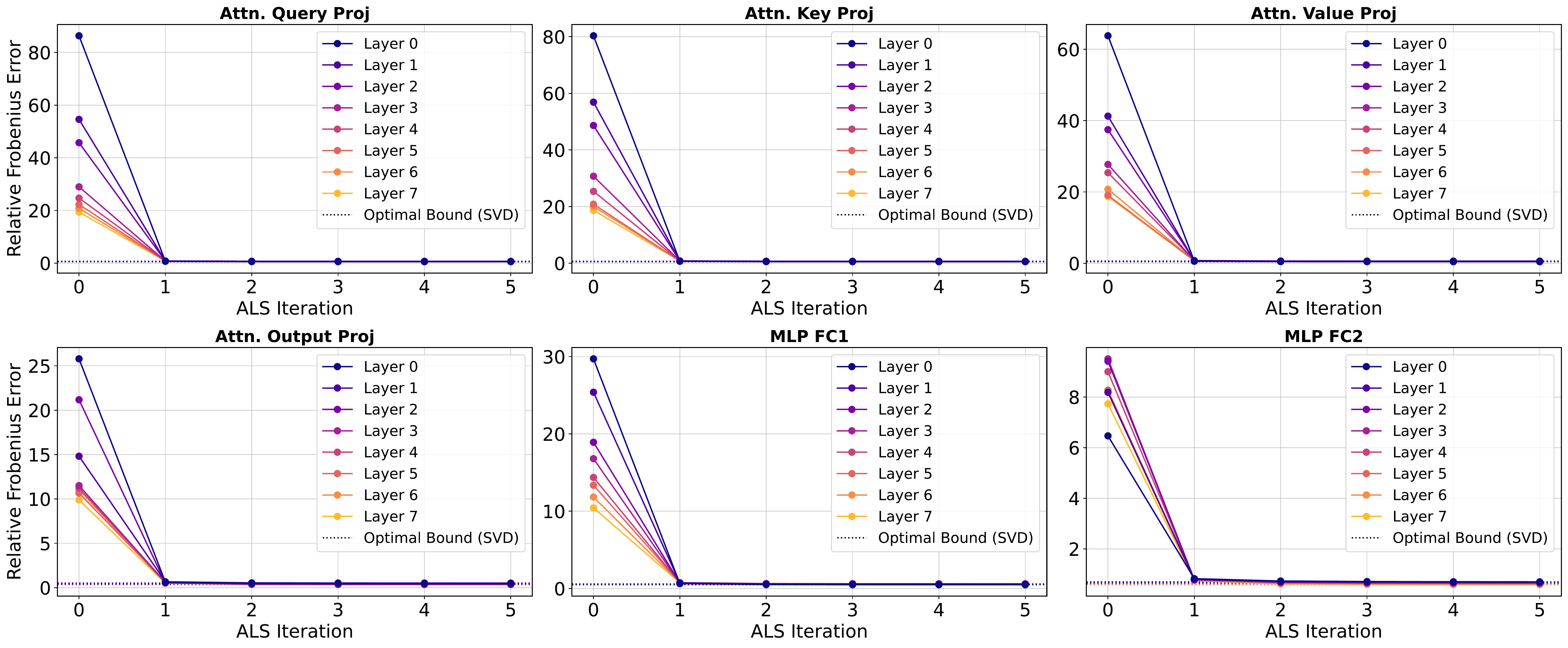}
        \caption{Layers 0--7 of ViT-L.}
        \label{fig:als_sup1}
    \end{subfigure}

    \begin{subfigure}{1.0\linewidth}
        \centering
        \includegraphics[width=1.0\linewidth]{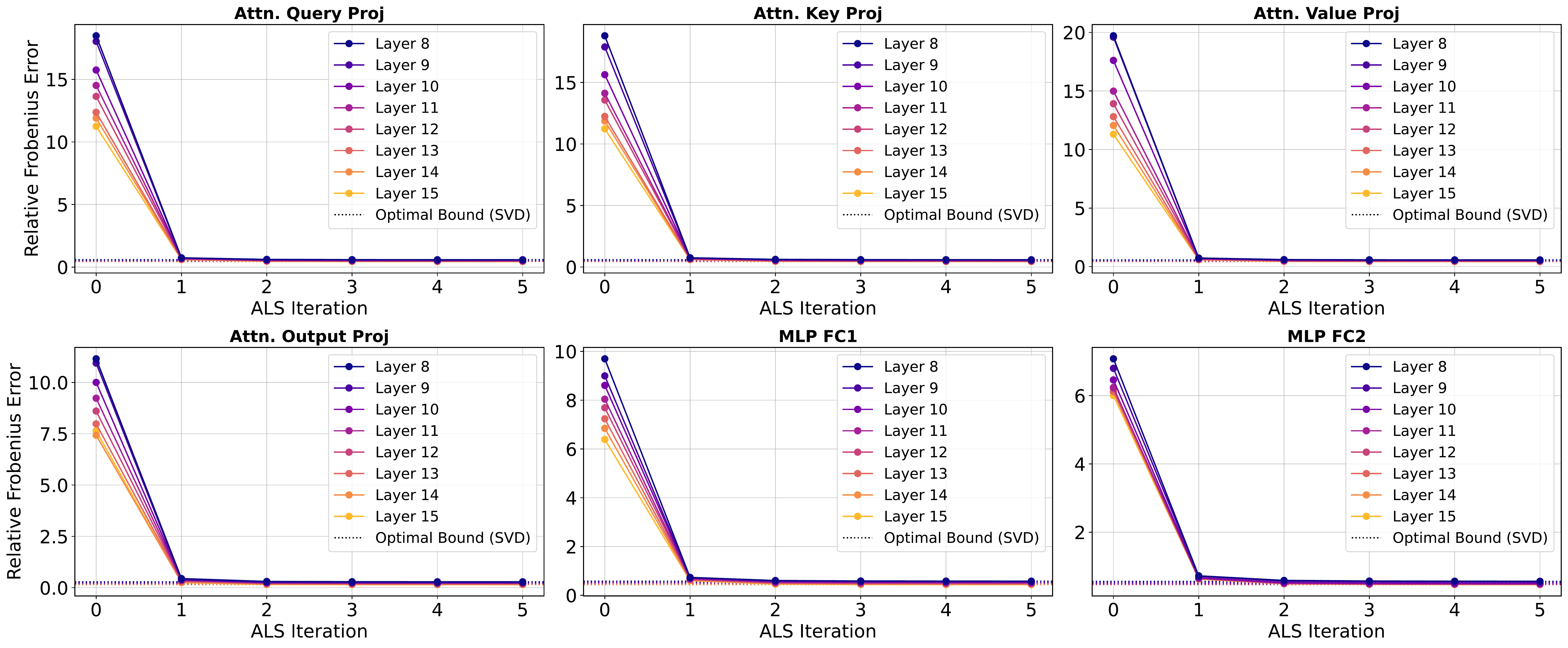}
        \caption{Layers 8--15 of ViT-L.}
        \label{fig:als_sup2}
    \end{subfigure}

    \begin{subfigure}{1.0\linewidth}
        \centering
        \includegraphics[width=1.0\linewidth]{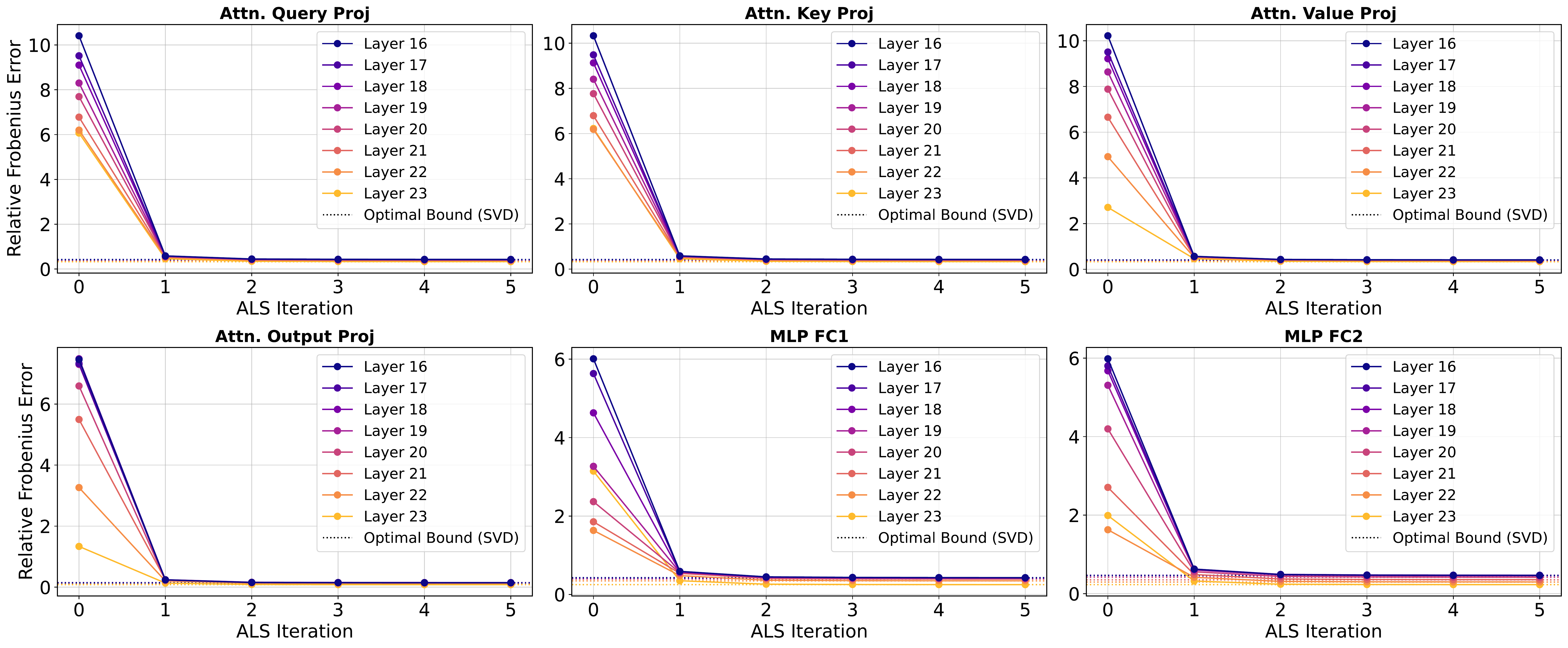}
        \caption{Layers 16--23 of ViT-L.}
        \label{fig:als_sup3}
    \end{subfigure}

    \caption{Convergence of ALS-based residual product approximation across ViT-L linear layers.}
    \vspace{-1.5em}
    \label{fig:als_converge_full}
\end{figure}

\end{document}